\documentclass[11pt]{article}

\usepackage[round]{natbib}
\usepackage[utf8]{inputenc} 
\usepackage[T1]{fontenc}    
\usepackage{url}            
\usepackage{booktabs}       
\usepackage{amsfonts}       
\usepackage{nicefrac}       
\usepackage{microtype}      
\usepackage{xcolor}         
\usepackage{wrapfig}
\usepackage[top=1in, left=1in, right=1in, bottom=1in]{geometry}
\usepackage[colorlinks,linkcolor=magenta,filecolor=blue,citecolor=blue,urlcolor=blue]{hyperref}%
\usepackage{graphicx}
\usepackage{bbm}
\usepackage[ruled]{algorithm2e}
\usepackage{algpseudocode}
\usepackage{amsthm}
\usepackage{authblk}
\usepackage{thmtools,thm-restate}
\usepackage{booktabs}
\usepackage{multirow}
\usepackage{subcaption}
\usepackage{amsmath}
\usepackage{amssymb}
\usepackage{pifont}
\usepackage{makecell}
\usepackage{multicol}
\usepackage{enumitem}
\newcommand{\sas}[1]{\texttt{WTA-CRS} #1}



\usepackage{amsmath,amsfonts}
\newcommand{\bm}[1]{\mathbf #1}



\newcommand{\relu}[1]{\textrm{ReLU} #1}
\newcommand{\gelu}[1]{\textrm{GeLU} #1}
\newcommand{\mm}[1]{\texttt{GEMM} #1}







\def\eqref#1{equation~(\ref{#1})}
\def\Eqref#1{Equation~(\ref{#1})}








\def\1{\bm{1}}








\def\vx{{\bm{x}}}
\def\vy{{\bm{y}}}
\def\vz{{\bm{z}}}



\def\mH{{\bm{H}}}

\def\mW{{\bm{W}}}
\def\mX{{\bm{X}}}
\def\mY{{\bm{Y}}}
\def\mZ{{\bm{Z}}}

\DeclareMathAlphabet{\mathsfit}{\encodingdefault}{\sfdefault}{m}{sl}
\SetMathAlphabet{\mathsfit}{bold}{\encodingdefault}{\sfdefault}{bx}{n}











\newcommand{\E}{\mathbb{E}}

\newcommand{\Var}{\mathrm{Var}}



\def\Algnameabbr{\sas\!\!}

\makeatletter
\newcommand{\printfnsymbol}[1]{%
  \textsuperscript{\@fnsymbol{#1}}%
}
\makeatother

\title{Winner-Take-All Column Row Sampling for Memory Efficient Adaptation of Language Model  }

\author[1]{Zirui Liu\thanks{Equal contribution. The order of authors is determined by flipping a coin.}}
\author[1]{Guanchu Wang\printfnsymbol{1}}
\author[1]{Shaochen Zhong}
\author[1]{Zhaozhuo Xu}
\author[1]{Daochen Zha}
\author[1]{Ruixiang Tang}
\author[2]{Zhimeng Jiang}
\author[1]{Kaixiong Zhou}
\author[3]{Vipin Chaudhary}
\author[3]{Shuai Xu}
\author[1]{Xia Hu}
\affil[1]{Department of Computer Science, Rice University}
\affil[2]{Department of Computer Science, Texas A\&M University}
\affil[3]{Department of Computer and Data Sciences, Case Western Reserve University}
\affil[ ]{{\texttt{\{Zirui.Liu, Guanchu.Wang, Shaochen.Zhong, Zhaozhuo.Xu, Daochen.Zha, Ruixiang.Tang, Kaixiong.Zhou, Xia.Hu\}@rice.edu}, \texttt{zhimengj@tamu.edu},
\affil[ ]\texttt{\{vxc204, sxx214\}@case.edu}}}
\date{}

\begin{document}

\captionsetup[figure]{name={Fig.},labelsep=period} 

\maketitle

\begin{abstract}
With the rapid growth in model size, fine-tuning the large pre-trained language model has become increasingly difficult due to its extensive memory usage. 
Previous works usually focus on reducing the number of trainable parameters in the network. 
While the model parameters do contribute to memory usage, the primary memory bottleneck during training arises from storing feature maps, also known as activations, as they are crucial for gradient calculation. 
Notably, neural networks are usually trained using stochastic gradient descent.
We argue that in stochastic optimization, models can handle noisy gradients as long as the gradient estimator is unbiased with reasonable variance.
Following this motivation, we propose a new family of unbiased estimators called \sas, for matrix production with reduced variance, which only requires storing the sub-sampled activations for calculating the gradient.
Our work provides both theoretical and experimental evidence that, in the context of tuning transformers, our proposed estimators exhibit lower variance compared to existing ones.
By replacing the linear operation with our approximated one in transformers, we can achieve up to 2.7$\times$ peak memory reduction with almost no accuracy drop and enables up to $6.4\times$ larger batch size.
Under the same hardware, \sas enables better down-streaming task performance by applying larger models and/or faster training speed with larger batch sizes.
The code is available at \url{https://github.com/zirui-ray-liu/WTACRS/}.

\end{abstract}

\section{Introduction}
Pre-trained language models (LMs) with transformer architecture have achieved remarkable success in numerous natural language processing (NLP) tasks \citep{vaswani2017attention, devlin2018bert, DBLP:journals/corr/abs-2302-03225, t5, gpt3, yang2023harnessing, zha2023data}.
Specifically,  these models are trained on vast text corpora to acquire general-purpose representations, which are then adapted to a specific task by fine-tuning on task-specific data.
In recent studies, it has been convincingly demonstrated that significantly increasing the number of parameters in pre-trained LMs leads to remarkable improvements in performance~\citep{kaplan2020scaling}. As a result, there is now an urgent necessity to effectively adapt these models, equipped with billion-scale parameters, to a wide range of tasks.

However, a significant disparity exists between the memory requirements of pre-trained LMs and the capacity of current hardware, particularly GPUs.
For example, even a GPU with 24GB memory cannot accommodate the fine-tuning process of the T5-3B model \citep{t5} with batch size one, which boasts three billion parameters.
Without additional techniques, attempting to fine-tune billion-scale LMs on a single GPU is impossible. 
Although model-parallel fine-tuning is feasible, the majority of the time, we cannot bear the expense of acquiring multiple GPUs or the communication overhead involved. To ensure the smooth deployment of language models during the fine-tuning process, it is crucial to adapt them for operation on a single GPU.

To address this issue, 
several parameter-efficient tuning methods are proposed \citep{softprompt, lst, prefix, bitfit, hu2021lora, karimi2021compacter, adapter}.
Specifically,
adapters \citep{adapter, karimi2021compacter} insert a small module into the transformer blocks and only update it while keeping other parameters fixed. Similarly, prompt tuning \citep{softprompt} introduces a small vector that is concatenated with the input embeddings and updated during the tuning process.
LoRA \citep{hu2021lora}  injects trainable rank decomposition matrices into the transformer block, updating them while freezing the others.
Parameter-efficient tuning methods mainly reduce the memory taken by the optimizer states \citep{kingma2014adam, hu2021lora}.
Although the optimizer states contribute to the memory footprint, \emph{storing
activations (or feature maps) is the main memory bottleneck during training} (often $>70\%$) \citep{actnn, checkmate, dtr, tempo}.
Thus, parameter-efficient methods often do not reduce memory usage by much \citep{lst, tempo}.

\begin{wrapfigure}{r}{0.36\textwidth}
 \vspace{-2em}
  \begin{center}
    \includegraphics[width=0.35\textwidth]{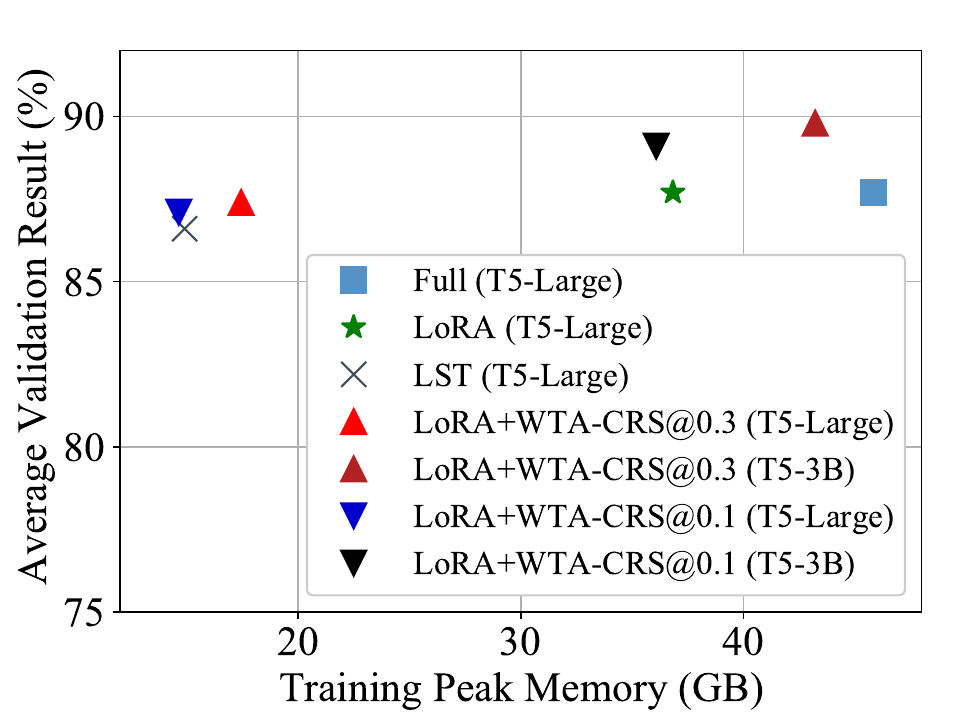}
  \end{center}
    \vspace{-1em}
  \caption{\small Accuracy-memory trade-off of \sas and other memory-efficient tuning methods.
  Unless specially stated, we use the T5-Large in the figure.}
  \label{fig: accuracy-memory_trade-off}
\end{wrapfigure}

In parallel, we can reduce the main memory bottleneck by reducing the activation storage in fine-tuning. Since transformer-based models are mainly built based on the linear layer, a less-explored direction is to replace the expensive matrix multiplication operation with its memory-efficient estimations using column-row sampling (CRS) \citep{adelman2021faster, mcmatmul}.
The key idea of CRS is to sub-sample tensors onto low-dimensional spaces and perform the original operations here. 
Specifically, for the linear operation between two matrices $\mathbf{A}\in\mathbb{R}^{n\times m}$ and $\mathbf{B}\in\mathbb{R}^{m\times q}$ (in the context of machine learning, $\mathbf{A}$ is often activations), 
\textbf{we first sample $k$ ($k<m$) column-row pairs according to a pre-defined distribution.}
Then we obtain $\mathbf{A}'\in\mathbb{R}^{n\times k}$ and $\mathbf{B}'\in\mathbb{R}^{k\times q}$ ($k<m$) by picking $k$ columns of $\mathbf{A}$ and the corresponding rows of $\mathbf{B}$ according to the sampled column-row pairs \citep{mcmatmul}.
Finally, we estimate $\mathbf{A}\mathbf{B}\approx \mathbf{A'}\mathbf{B'}$.
In this way, we only need to store the sub-matrix $\mathbf{A}'$ and $\mathbf{B}'$ in GPU memory to perform the computation.
Moreover, transformer-based models training/tuning are performed with the first-order stochastic optimizer, e.g., Adam \citep{kingma2014adam}.
In stochastic optimization, models can work with noisy gradients, \emph{as long as the gradient estimator is unbiased and has a reasonable variance.}
In view of such, we ask: \textbf{why spend resources on obtaining exact gradients when we are using stochastic optimization?}
Motivated by this, we focus on obtaining unbiased gradients cheaply with approximated matrix multiplication.


The approximation method reduces the memory usage at the cost of giving outputs with variance.
Thus there naturally exists an accuracy-memory trade-off.
The main challenge is how to integrate the approximated matrix multiplication into transformer with minimal gradient variance.
In this paper, we propose a new family of unbiased estimator for matrix multiplication with reduced variance, dubbed Winner-Take-All Column-Row Sampling (\sas).
Compared to CRS, \sas reduces the variance of an estimator by focusing more on high-probability regions of the sampling distribution.
Moreover, \sas can serve as a drop-in replacement for the linear operation in transformers, providing an unbiased weight gradient with reduced memory usage.
As shown in Figure~\ref{fig: accuracy-memory_trade-off}, our method achieves better accuracy-memory trade-off than state-of-the-art memory-efficient tuning methods, e.g., LST \citep{lst} and LoRA \citep{hu2021lora}. 
Moreover, since \sas executed at the operation level, it is orthogonal to most of the existing parameter-efficient tuning methods.
Our contributions are highlighted as follows:

\begin{itemize}[nosep]
    \item We design a new family of unbiased estimator for matrix multiplication with reduced variance. 
    We theoretically and experimentally verify that it has smaller variance than the established one under the context of tuning transformer.
    \item By replacing the linear operation with \sas in transformers, we can achieve up to 2.7$\times$ peak memory reduction with almost no accuracy drop, and enables up to $6.4\times$ larger batch size.
    As shown in Figure \ref{fig: accuracy-memory_trade-off}, \sas stands out as an exceptional solution capable of fine-tuning T5-3B using a mere 40GB GPU memory budget, with three billion parameters. 
    Thus, we achieve remarkable advancements in the adaptation of LMs on downstream tasks. 
    \item We implement \sas as a ready-to-use extension for Pytorch with an easy-to-use API that can also be combined with other memory-saving techniques.
\end{itemize}



\section{Background and Preliminary}
\label{sec: background}


In this section, we first analyze the memory usage of transformers.
Then we introduce the background on the approximated matrix multiplication.

\subsection{The Memory Usage of Transformers}
\label{sec: memory}

\begin{wrapfigure}{r}{0.35\textwidth}
 \vspace{-2.5em}
  \begin{center}
    \includegraphics[width=0.35\textwidth]{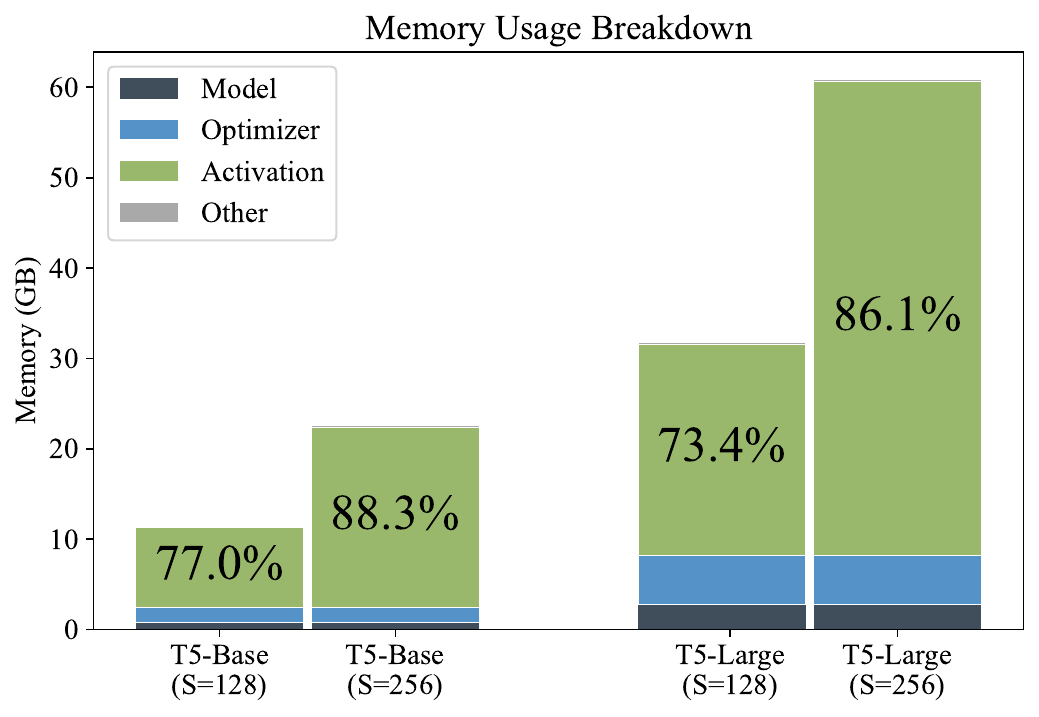}
  \end{center}
    \vspace{-1em}
  \caption{The GPU memory usage breakdown for fine-tuning T5 \citep{glue}, where the batch size $B$ is 64 and sequential length $S$ is 128 or 256.}
  \label{fig: percentage}
  \vspace{-3em}
\end{wrapfigure}

In each training step of backpropagation, it has exactly two phases, i.e., one forward phase and one backward phase. 
Transformer-based models are mainly built based on the linear operation,
which can be written as:

\begin{subequations}
\begin{align}
        \texttt{Forward Pass}~~~~~~~~\mZ &= \mm({\color{red}\mH}, \mW),
        \label{eq: fwd}\\
    \texttt{Backward Pass}~~~~~~~~\nabla\mH &= \mm(\nabla\mZ, \mW^\top), \label{eq: bwd_x}\\
    \nabla\mW &= \mm({\color{red}\mH^\top}, \nabla\mZ),
    \label{eq: bwd_w}
\end{align}
\end{subequations}

where $\mm(\cdot, \cdot)$ is the General Matrix Multiplication operation, $\mH$ and $\mZ$ are the activation (or input feature maps) and output feature maps, respectively.
$\mW$ is the weight of the linear layer.
$\nabla\mH$, $\nabla\mW$, and $\nabla\mZ$ are the gradient of $\mH$, $\mW$, and $\mZ$, respectively.
From \Eqref{eq: bwd_w}, activations $\mH$ are used in the backward phase.
In commonly used deep learning framework \citep{abadi2016tensorflow, paszke2019pytorch}, it requires storing $\mH$ in GPU memory during the forward pass, for calculating the weight gradient $\nabla\mW$ in the backward pass.

Previous works show that although the model parameters
contribute to the memory footprint, 
activations (e.g., storing $\mH$) are the main memory bottleneck during training \citep{actnn, checkmate, dtr, tempo}.
To get a sense of the scale, we show in Figure \ref{fig: percentage} that for popular transformer models like T5, activations may take roughly $73\sim88\%$ of the total memory, depending on the batch size $B$ and sequential length $S$.


\subsection{Approximated \texorpdfstring{$\mm$}{~} With Sampling}
\label{sec: prelim}

Let $\mX\in\mathbb{R}^{n\times m}$,  $\mY\in\mathbb{R}^{m\times q}$ be two matrices.
The goal is to efficiently estimate the matrix production $\mX\mY$.
Singular Value Decomposition (SVD) outputs provably optimal low-rank estimation of $\mX\mY$ \citep{adelman2021faster}.
However, SVD is almost as expensive as matrix production itself.
Instead, the sampling algorithm is proposed to approximate the matrix product $\mX\mY$
by sampling $k$ columns of $\mX$ and corresponding rows of $\mY$ to form smaller matrices, which are then multiplied as usual \citep{mcmatmul, drineas2001fast}:
\begin{align}
\mm(\mX, \mY) &= \sum_{i=1}^{m} \mX_{:,i} \mY_{i,:}\approx \sum_{t=1}^{k} \frac{1}{kp_{i_t}}\mX_{:,i_t}\mY_{i_t,:} =\mX'\mY',
        \label{eq: approx_matmul}
\end{align}

where $\mX_{:,i}\in\mathbb{R}^{n\times 1}$ and $\mY_{i,:}\in\mathbb{R}^{1\times q}$ are the $i^{\mathrm{th}}$ column and row of $\mX$ and $\mY$, respectively.
\textbf{In this paper,  we call $(\mX_{:,i}, \mY_{i,:})$ the $i^{\mathrm{th}}$ column-row pair.}
$k$ is the number of sampled pairs ($1\leq k\leq m$).
$\mathcal{P}=\{p_i\}_{i=1}^m$ is a probability distribution over the column-row pairs.
$i_t\in\{1,\cdots m\}$ is the index of the sampled column-row pair at the $t^{\mathrm{th}}$ trial.
$s_{t}$ is the scale factor.
$\mX'\in\mathbb{R}^{n\times k}$ and $\mY'\in\mathbb{R}^{k \times q}$ are the normalized sub-matrices sliced according to the sampled column-row pairs.


Existing work \citep{mcmatmul} shows $\mX'\mY'$ is an unbiased estimation of $\mX\mY$, i.e., $\E[\mX'\mY']=\mX\mY$.
Furthermore, the approximation error $\E[||\mX\mY-\mX'\mY'||_F]$ is minimized when the probabilities
$\{p_i\}_{i=1}^m$ are proportional to the product of the column-row Euclidean norms \citep{mcmatmul} (Proof in Appendix \ref{app: theory}):
\begin{equation}
\label{eq: col_row_norm}
    p_i = \frac{ ||\mX_{:,i}||_2\ ||\mY_{i,:}||_2}{\sum_{j=1}^{m} ||\mX_{:,j}||_2\ ||\mY_{j,:}||_2}.
\end{equation}

As we analyzed in Section \ref{sec: memory}, storing the activation $\mH$ is the major memory bottleneck.
\textbf{
If we can replace $\mm(\mH^\top, \nabla\mZ)$ in \Eqref{eq: bwd_w} with $\mH'^\top\nabla\mZ'$ following the paradigm of \Eqref{eq: approx_matmul}, 
 then we only need $\mH'$ instead of $\mH$ in GPU memory to compute the gradient, which significantly decreases the memory usage of activations.}
 This estimation linearly reduces the memory complexity from $\mathcal{O}(nm)$ to $\mathcal{O}(nk)$.
 Also, the total number of floating point operations (FLOPs) is reduced as well since the computation is executed on two smaller matrices.
 \emph{For the ease of illustration, in this paper we call the distribution in \Eqref{eq: col_row_norm} the \textbf{column-row index distribution}}.
In the next section, 
we explore how to reduce memory usage via sampling-based matrix multiplication.

 


\section{Methodology}

In recent years, we have observed that deep neural network training can be performed almost entirely with first-order \emph{stochastic optimization} \citep{kingma2014adam}.
Thus intuitively, \emph{in stochastic optimization we can reduce the resources spent on obtaining gradients, as long as the estimated gradient is unbiased with reasonable variance} \citep{chmielminimum, chmiel2021logarithmic, oktay2020randomized}.
Following this motivation, we first design a new unbiased estimator for matrix multiplication with reduced variance compared to the one in \Eqref{eq: approx_matmul} (Section \ref{sec: sum_and_sample} ).
Then we introduce how to replace the \mm in Transformer with its approximated version to reduce the memory usage (Section \ref{sec: where}).

\subsection{Winner-Take-All Column-Row Sampling: A New Unbiased Estimator for \texorpdfstring{$\mm$}{~}}
\label{sec: sum_and_sample}
In this section, we mathematically design a new unbiased estimator for \mm with reduced variance called WTA-CRS (Winner-Take-All Column-Row Sampling).
Following the notation in Section \ref{sec: prelim}, let $\mX\in\mathbb{R}^{n\times m}$,  $\mY\in\mathbb{R}^{m\times q}$ be two matrices.
$\mathcal{P}=\{p_i\}_{i=1}^m$ is the column-row index distribution in \Eqref{eq: col_row_norm}\footnote{Here we note that the theoretical analysis in this section can be applied to any probability distribution, not only limited to the one in \Eqref{eq: col_row_norm}.}.
We first define the variable $f(i)$ as 
\begin{equation}
f(i)=\frac{\mX_{:i}\mY_{i:}}{p_i}, \nonumber
\end{equation} 

$f(i)$ is an unbiased estimation for the matrix production between $\mX$ and $\mY$.
To see this, 
\begin{align}
\E_{j\sim \mathcal{P}}[f(j)]=\sum_{i=1}^m  p_{i}\frac{\mX_{:,i}\mY_{i:}}{p_{i}}=\mX\mY.   \nonumber 
\end{align}
We note that the prior approximated matrix multiplication in \Eqref{eq: approx_matmul} is the direct extension of $f(i)$ by taking the average of $\{f(i_t)\}_{t=1}^k$ among $k$ independent random trials to reduce the variance. 
Here we explore an alternative approach to reduce the variance of $f(i)$ beyond simple averaging. Our core idea is to partition the column-row index distribution $\mathcal{P}=\{p_i\}_{i=1}^m$ into two complementary regions based on the probability mass: a high-probability region $\mathcal{P}^\mathcal{C}$ and a low-probability region $\mathcal{P}^{\mathcal{D}\backslash\mathcal{C}}$, where $\mathcal{D}=\{1,\cdots, m\}$ is the whole set and $\mathcal{C}$ is the set of the column-row index with the largest probability.
\textbf{Let $\mathcal{C}$ be the set of column-row pair indices associated with $|\mathcal{C}|$ largest $p_i$.}
We define \sas estimator for $\mX\mY$ as follows:
\begin{align}
\E_{j\sim \mathcal{P}^{\mathcal{D}\backslash\mathcal{C}}} \Big[\sum_{c\in\mathcal{C}} f(c)p_c + (1 - \sum_{c\in\mathcal{C}}p_c)f(j)\Big].
\label{eq: sum_and_sample}
\end{align}

We note that \textbf{the random variable in \Eqref{eq: sum_and_sample} is the column-row pair index $j$, and is only sampled from $\mathcal{D}\backslash\mathcal{C}$}.
The estimator defined in \Eqref{eq: sum_and_sample} contains two parts. 
The first part 
$\sum_{c\in\mathcal{C}} f(c)p_c$ has no relationship with the random variable $j$ and is summed deterministically. 
The second part $f(j)$ is sampled stocastically, but scaled by the factor $(1 - \sum_{c\in\mathcal{C}} p_c)$.
When $\mathcal{P}=\{p_i\}_{i=1}^m$  is concentrated on a small number of
atoms, the scaling factor $(1 - \sum_{c\in\mathcal{C}}p_c)$ for the stochastic term  should be small.
Therefore, we intuitively expect the estimator to have a small variance in this case due to a small scaling factor.
In this way, we reduce the variance of an estimator by focusing more on high-probability regions of the distribution (winner-take-all).
Below we formalize this intuition by showing the statistical property of our estimator regarding the bias and variance, respectively.

\begin{restatable}[Proof in Appendix~\ref{app: theory_unbias}]{theo}{theounbias}
\label{theo: unbias}
The estimator defined in \Eqref{eq: sum_and_sample} is an unbiased estimator for matrix production $\mX\mY$, i.e, $\E_{j\sim \mathcal{P}^{\mathcal{D}\backslash\mathcal{C}}} [\sum_{c\in\mathcal{C}} f(c)p_c + (1 - \sum_{c\in\mathcal{C}}p_c)f(j)]=\mX\mY$.
\end{restatable}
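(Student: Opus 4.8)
The plan is to evaluate the expectation in \Eqref{eq: sum_and_sample} directly, using linearity together with the observation that the first summand is deterministic. The single point requiring care is the meaning of $\mathcal{P}^{\mathcal{D}\backslash\mathcal{C}}$: I would first record that it denotes the conditional law of $\mathcal{P}$ restricted to $\mathcal{D}\backslash\mathcal{C}$, so that for every $i\in\mathcal{D}\backslash\mathcal{C}$ we have $\Pr_{j\sim\mathcal{P}^{\mathcal{D}\backslash\mathcal{C}}}[j=i]=p_i\big/\big(1-\sum_{c\in\mathcal{C}}p_c\big)$, where the denominator equals $\sum_{i\in\mathcal{D}\backslash\mathcal{C}}p_i$ by $\sum_{i=1}^m p_i=1$. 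This normalization identity is the only nontrivial bookkeeping step.

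Next I would split the expectation. Since $\sum_{c\in\mathcal{C}}f(c)p_c$ does not depend on the random index $j$, linearity gives
\begin{equation}
\E_{j\sim \mathcal{P}^{\mathcal{D}\backslash\mathcal{C}}}\!\Big[\sum_{c\in\mathcal{C}} f(c)p_c + \big(1 - \textstyle\sum_{c\in\mathcal{C}}p_c\big)f(j)\Big]
= \sum_{c\in\mathcal{C}} f(c)p_c + \big(1 - \textstyle\sum_{c\in\mathcal{C}}p_c\big)\,\E_{j\sim \mathcal{P}^{\mathcal{D}\backslash\mathcal{C}}}[f(j)]. \nonumber
\end{equation}
For the stochastic term I substitute the conditional probabilities above; the prefactor $\big(1-\sum_{c\in\mathcal{C}}p_c\big)$ cancels the normalizing denominator, leaving $\sum_{i\in\mathcal{D}\backslash\mathcal{C}} p_i f(i)$. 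Plugging in $f(i)=\mX_{:,i}\mY_{i,:}/p_i$ into both terms, the $p_i$ factors cancel, and the two sums combine to $\sum_{c\in\mathcal{C}}\mX_{:,c}\mY_{c,:}+\sum_{i\in\mathcal{D}\backslash\mathcal{C}}\mX_{:,i}\mY_{i,:}=\sum_{i=1}^m \mX_{:,i}\mY_{i,:}=\mX\mY$, where the last step is exactly the column-row expansion of matrix multiplication already stated in \Eqref{eq: approx_matmul}.

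I do not anticipate a real obstacle here; the argument is a one-line computation once the conditional distribution is written out correctly. The only things worth a sentence in the write-up are the two degenerate cases: when $\mathcal{C}=\emptyset$ the estimator collapses to the single-draw CRS estimator $f(j)$ with $j\sim\mathcal{P}$ (so unbiasedness is the computation already recalled in the excerpt), and when $\mathcal{C}=\mathcal{D}$ the deterministic sum $\sum_{c\in\mathcal{C}}\mX_{:,c}\mY_{c,:}$ is already exactly $\mX\mY$ while the stochastic term carries weight $1-\sum_{c\in\mathcal{C}}p_c=0$; the general argument above subsumes both.
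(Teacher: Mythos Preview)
Your proposal is correct and follows essentially the same approach as the paper's proof: split off the deterministic sum by linearity, write out the conditional law $\mathcal{P}^{\mathcal{D}\backslash\mathcal{C}}$ so that the factor $(1-\sum_{c\in\mathcal{C}}p_c)$ cancels the normalizer, and recombine the two sums into $\sum_{i=1}^m p_i f(i)=\mX\mY$. The paper writes this as a single chain of equalities and finishes by invoking $\E_{j\sim\mathcal{P}}[f(j)]=\mX\mY$ rather than expanding $f(i)$ explicitly, but this is a cosmetic difference; your additional remarks on the degenerate cases $\mathcal{C}=\emptyset$ and $\mathcal{C}=\mathcal{D}$ are a nice touch the paper omits.
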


Theorem \ref{theo: unbias} states that our proposed estimator in \Eqref{eq: sum_and_sample} is unbiased.
Below we compare our proposed estimator to the CRS estimator in \Eqref{eq: approx_matmul} in terms of the variance.
Suppose we have the budget of only utilizing $k$ column-row pairs for approximating the matrix production.
From the implementation perspective, the estimator defined in \Eqref{eq: approx_matmul} estimates $\mm(\mX,\mY)$ as:
\begin{align}
\texttt{(CRS)}~~~~~~g(\mX,\mY) = \frac{1}{k}\sum_{t=1}^k f(i_t), ~~~i_1,\cdots, i_k \overset{\text{i.i.d}}{\sim} \mathcal{P}.
\label{eq: crs_implementation}
\end{align}

Our estimator defined in \Eqref{eq: sum_and_sample} splits the budget $k$ into two parts. Namely, the first part explicitly sums the expectation terms for the largest probability group $\mathcal{C}$ ($|\mathcal{C}| < k$),
while stochastically average $k-|\mathcal{C}|$ samples drawn from $\mathcal{D}\backslash\mathcal{C}$ to estimate the remaining terms, up to scale:
\begin{align}
(\texttt{WTA-CRS})~~~
\hat g(\mX,\mY) = \sum_{c\in\mathcal{C}} f(c)p(c)+\frac{1 - \sum_{c\in\mathcal{C}}p_c}{k-|\mathcal{C}|}\sum_{j=1}^{k-|\mathcal{C}|}f(j), ~~~i_1,\cdots, i_{k-|\mathcal{C}|} \overset{\text{i.i.d}}{\sim} \mathcal{P}^{\mathcal{D}\backslash\mathcal{C}}.
\label{eq: our_estimator}
\end{align} 



\begin{restatable}[Proof in Appendix~\ref{app: theory_var}]{theo}{theounvar}
\label{theo: var}

Suppose the total budget of column-row pairs is $k$. If $\mathcal{C}$ satisfies 
\begin{equation}
\label{eq: var_thresh}
\sum_{c\in\mathcal{C}}p_c > \frac{|\mathcal{C}|}{k},
\end{equation}
then we have $\Var[\hat g(\mX,\mY)] < \Var[g(\mX,\mY)]$.
Moreover, $\Var[\hat g(\mX,\mY)]$ is minimized when $|\mathcal{C}|=\min_{|\mathcal{C}|\in\{0,\cdots,k\}}\frac{1 - \sum_{c\in\mathcal{C}}p_c}{k-|\mathcal{C}|}$.
\end{restatable}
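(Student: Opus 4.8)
The plan is to compute $\Var[g(\mX,\mY)]$ and $\Var[\hat g(\mX,\mY)]$ explicitly in terms of the quantities $f(i)$, $p_i$, $k$, and $|\mathcal{C}|$, and then compare them directly. Throughout I will work entrywise (or with the Frobenius norm of the matrix-valued variance, i.e. $\Var[\cdot] := \E\|\cdot - \E[\cdot]\|_F^2$), since everything decouples over coordinates and the matrix case follows by summing the scalar estimates.

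First I would handle the CRS estimator. Since $g(\mX,\mY) = \frac1k\sum_{t=1}^k f(i_t)$ is an average of $k$ i.i.d.\ draws, $\Var[g] = \frac1k \Var_{j\sim\mathcal{P}}[f(j)] = \frac1k\big(\E_{j\sim\mathcal{P}}[f(j)^2] - \|\mX\mY\|^2\big)$, and $\E_{j\sim\mathcal{P}}[f(j)^2] = \sum_{i=1}^m p_i \frac{(\mX_{:,i}\mY_{i,:})^2}{p_i^2} = \sum_{i=1}^m \frac{(\mX_{:,i}\mY_{i,:})^2}{p_i}$. Next, for $\hat g$: the deterministic part $\sum_{c\in\mathcal{C}}f(c)p_c$ contributes nothing to the variance, so $\Var[\hat g] = \big(\frac{1-\sum_{c\in\mathcal{C}}p_c}{k-|\mathcal{C}|}\big)^2 \cdot (k-|\mathcal{C}|)\cdot \Var_{j\sim\mathcal{P}^{\mathcal{D}\backslash\mathcal{C}}}[f(j)]$, where the conditional distribution has mass $p_j/(1-\sum_{c\in\mathcal{C}}p_c)$ on $j\in\mathcal{D}\backslash\mathcal{C}$. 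Writing $\beta := 1-\sum_{c\in\mathcal{C}}p_c$ and expanding, $\E_{j\sim\mathcal{P}^{\mathcal{D}\backslash\mathcal{C}}}[f(j)^2] = \frac1\beta\sum_{j\notin\mathcal{C}}\frac{(\mX_{:,j}\mY_{j,:})^2}{p_j}$, and the mean of $f$ under the conditional distribution is $\frac1\beta\sum_{j\notin\mathcal{C}}\mX_{:,j}\mY_{j,:} = \frac1\beta(\mX\mY - \sum_{c\in\mathcal{C}}\mX_{:,c}\mY_{c,:})$. Collecting terms, $\Var[\hat g] = \frac{\beta}{k-|\mathcal{C}|}\sum_{j\notin\mathcal{C}}\frac{(\mX_{:,j}\mY_{j,:})^2}{p_j} - \frac{1}{k-|\mathcal{C}|}\|\mX\mY - \sum_{c\in\mathcal{C}}\mX_{:,c}\mY_{c,:}\|^2$.

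With both formulas in hand, the inequality $\Var[\hat g] < \Var[g]$ becomes an algebraic comparison. The main step is to show that the condition $\sum_{c\in\mathcal{C}}p_c > |\mathcal{C}|/k$, equivalently $\beta < 1 - |\mathcal{C}|/k = (k-|\mathcal{C}|)/k$, i.e. $\frac{\beta}{k-|\mathcal{C}|} < \frac1k$, is exactly what drives the comparison: the dominant "second-moment" terms scale like $\frac1k\sum_{i}\frac{(\cdot)^2}{p_i}$ versus $\frac{\beta}{k-|\mathcal{C}|}\sum_{j\notin\mathcal{C}}\frac{(\cdot)^2}{p_i}$, and since the WTA sum is over a smaller index set with a strictly smaller prefactor, it is strictly smaller; one then has to check the lower-order (mean-squared) correction terms don't spoil this, which they don't because $\|\mX\mY\|^2 \le \|\mX\mY - \sum_{c\in\mathcal{C}}\mX_{:,c}\mY_{c,:}\|^2 + (\text{cross terms})$ can be controlled—more carefully, I would rearrange the whole difference $\Var[g]-\Var[\hat g]$ into a manifestly nonnegative expression. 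I expect the careful bookkeeping of these correction terms to be the main obstacle: naively dropping them gives the intuition, but making the strict inequality rigorous requires either an exact identity or a clean sub-claim isolating $(\frac1k - \frac{\beta}{k-|\mathcal{C}|})\sum_{j\notin\mathcal{C}}\frac{(\mX_{:,j}\mY_{j,:})^2}{p_j}$ as the leading positive term and showing the remainder is also handled.

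For the final "moreover" claim, once $\Var[\hat g]$ is written as a function of $|\mathcal{C}|$, I would observe that, for a fixed choice of which indices go into $\mathcal{C}$ (namely the top-$|\mathcal{C}|$ by $p_i$), the dependence on $|\mathcal{C}|$ enters through the factor $\frac{\beta}{k-|\mathcal{C}|} = \frac{1-\sum_{c\in\mathcal{C}}p_c}{k-|\mathcal{C}|}$ multiplying the (decreasing in $|\mathcal{C}|$) residual second-moment sum; minimizing the stated ratio $\frac{1-\sum_{c\in\mathcal{C}}p_c}{k-|\mathcal{C}|}$ over $|\mathcal{C}|\in\{0,\dots,k\}$ therefore minimizes the variance, which is the claimed characterization. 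I would just need to confirm monotonicity of the residual sum in $|\mathcal{C}|$ so that the optimization genuinely reduces to minimizing that single scalar ratio, and note the edge cases $|\mathcal{C}|=0$ (recovering CRS) and $|\mathcal{C}|=k$ are included in the range.
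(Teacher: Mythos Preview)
Your direct-computation route is viable in principle but differs from the paper's, and the obstacle you flag is real and not yet resolved in your plan. The paper avoids expanding both variances explicitly. Instead it applies the law of total variance to a single draw $f(i)$, conditioning on the event $\{i\in\mathcal{C}\}$ versus $\{i\in\mathcal{D}\setminus\mathcal{C}\}$: writing $\beta=1-\sum_{c\in\mathcal{C}}p_c$,
\[
\Var[f(i)] \;=\; \E_b\big[\Var[f(i)\mid b]\big] + \Var_b\big[\E[f(i)\mid b]\big]
\;\ge\; \beta\,\Var\big[f(i)\mid i\in\mathcal{D}\setminus\mathcal{C}\big],
\]
by dropping two nonnegative terms. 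Since $\Var[\hat g]=\frac{\beta^2}{k-|\mathcal{C}|}\Var[f\mid i\notin\mathcal{C}]$ and $\Var[g]=\frac{1}{k}\Var[f]$, this immediately gives $\Var[\hat g]\le \frac{\beta k}{k-|\mathcal{C}|}\Var[g]$, and the hypothesis $\sum_{c\in\mathcal{C}}p_c>|\mathcal{C}|/k$ is exactly $\frac{\beta k}{k-|\mathcal{C}|}<1$.

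By contrast, your plan compares the two explicit formulas term-by-term. The second-moment pieces do behave as you say, but the ``mean-squared correction'' pieces go the \emph{wrong} way in general: you would need $\frac{1}{k-|\mathcal{C}|}\|\mX\mY-\sum_{c\in\mathcal{C}}\mX_{:,c}\mY_{c,:}\|^2 \ge \frac{1}{k}\|\mX\mY\|^2$, which fails whenever $\mathcal{C}$ captures most of the product. So the ``cross-term control'' you sketch is not enough on its own; the manifestly nonnegative rearrangement you allude to is, once carried out, precisely the total-variance decomposition above (the dropped terms $(1-\beta)\Var[f\mid i\in\mathcal{C}]$ and $\Var_b[\E[f\mid b]]$ are what absorb the bad correction). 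So the missing ingredient in your plan is exactly this conditioning idea; once you have it, the explicit expansion becomes unnecessary.

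On the ``moreover'' clause: your concern about the residual second moment also varying with $|\mathcal{C}|$ is legitimate, and note that the paper's argument here is equally brief---it minimizes the \emph{upper bound} $\frac{1-\sum_{c\in\mathcal{C}}p_c}{k-|\mathcal{C}|}$ rather than $\Var[\hat g]$ itself, so both treatments are heuristic on this point.
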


Both the left- and right-hand sides of \Eqref{eq: var_thresh} depend on the size of the highest probability group $|\mathcal{C}|$, which controls the number of high probability column-row pairs that are directly added without sampling.
Below we experimentally investigate whether \Eqref{eq: var_thresh} holds under the context of fine-tuning the transformer-based model with varying $|\mathcal{C}|$.

\begin{figure*}[h!]
    \centering
    \begin{subfigure}[h]{0.34\linewidth}
    \includegraphics[width=\linewidth]{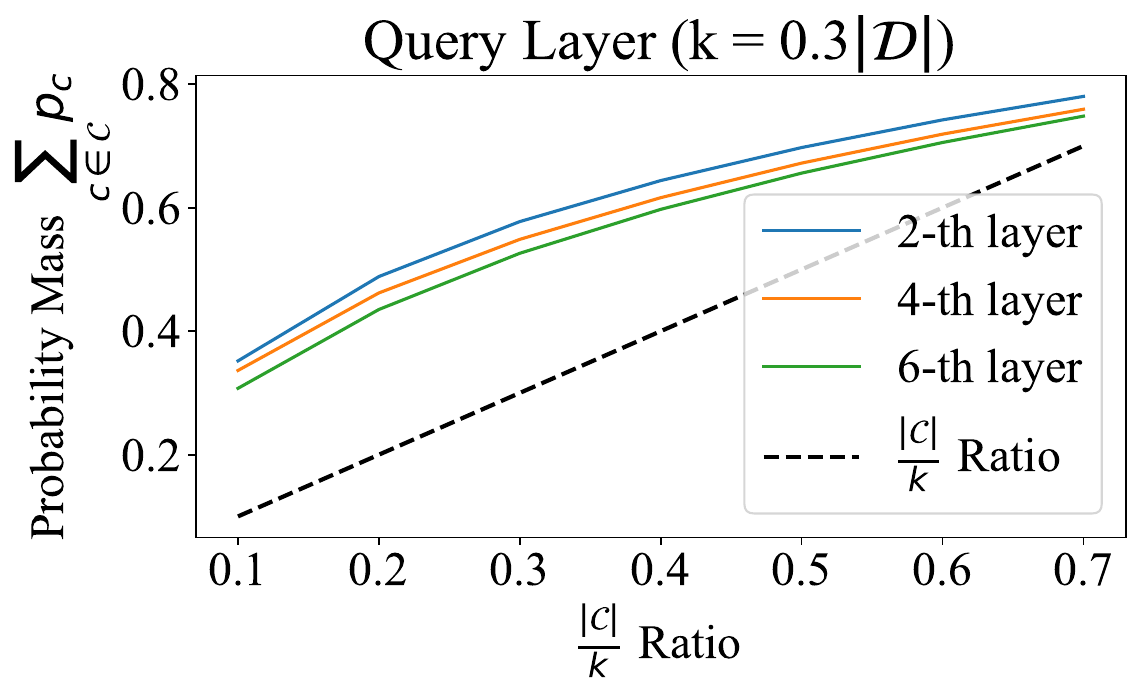}
    \end{subfigure}
    \begin{subfigure}[h]{0.31\linewidth}
    \includegraphics[width=\linewidth]{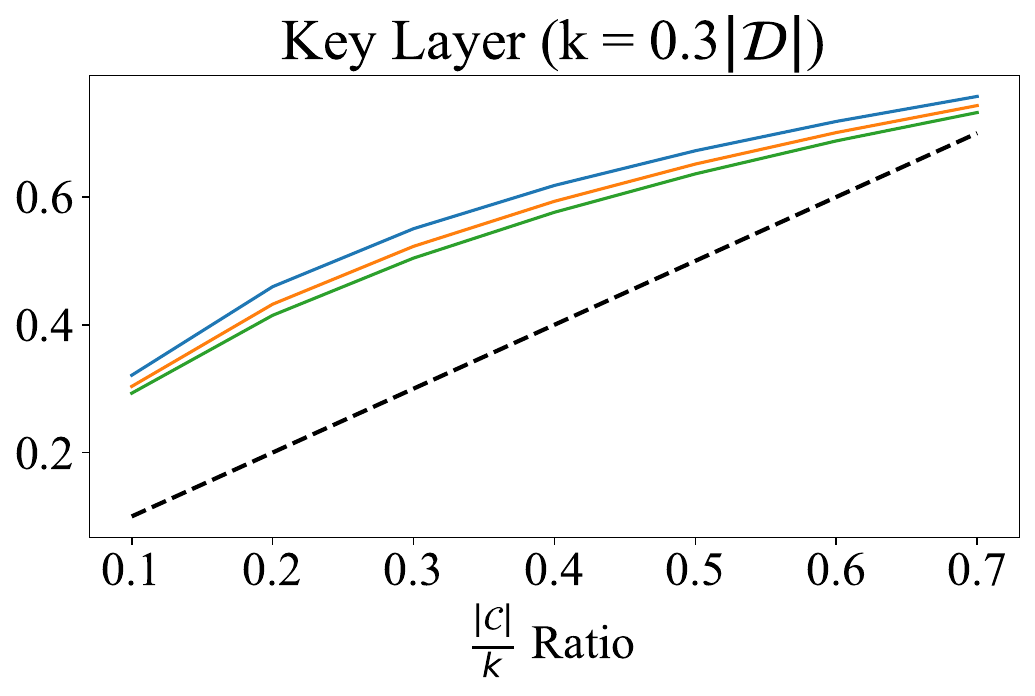}
    \end{subfigure}
    \begin{subfigure}[h]{0.31\linewidth}
    \includegraphics[width=\linewidth]{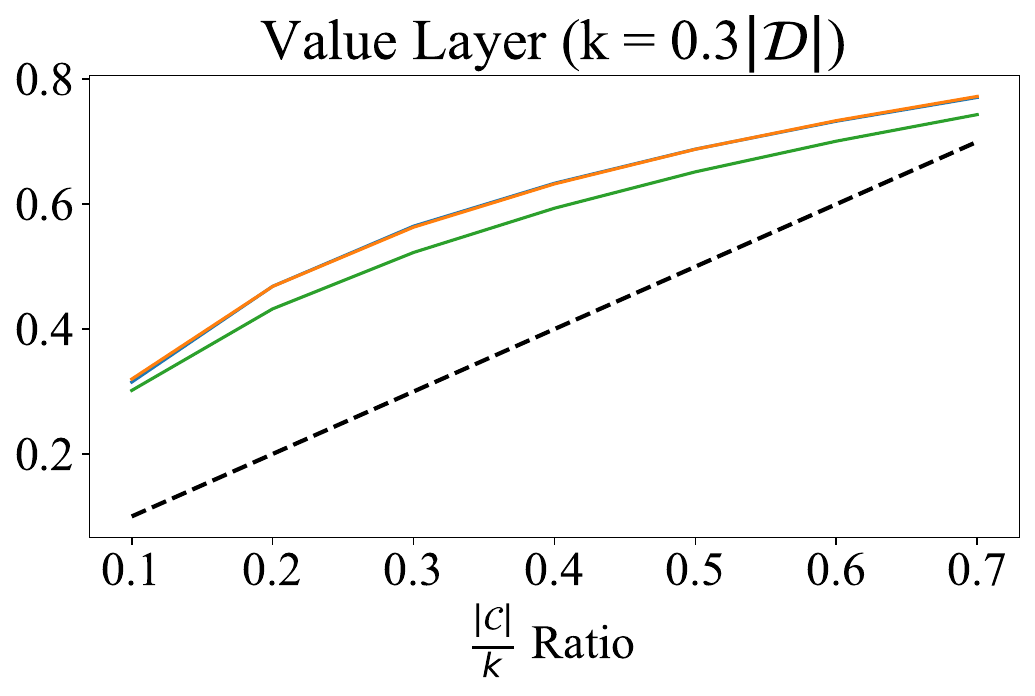}
    \end{subfigure}
        \vspace{-.5em}
    \caption{The probability mass $\sum_{c\in\mathcal{C}}p_c$ versus $\frac{|\mathcal{C}|}{k}$ in \Eqref{eq: var_thresh} at $k=0.3|\mathcal{D}|$.
    Here we visualize the column-row index distribution of query/key/value projection layer in the T5-base model, which is fine-tuned on RTE dataset.
    More similar results can be found in Appendix \ref{app: more_res_theo2}.}
    \vspace{-.5em}
    \label{fig: exp_assump_analysis_th2_direct}
\end{figure*}

\textbf{Experimental analysis.}
As shown in Figure \ref{fig: exp_assump_analysis_th2_direct},
we visualize the two terms in \Eqref{eq: col_row_norm} for the column-row index distribution of query, key, and value projection in the self-attention module, respectively \citep{vaswani2017attention}.
Specifically, we fix the total column-row pair budget $k=0.3|\mathcal{D}|$  and change the size of the highest probability group $|\mathcal{C}|$ from 0 to $k$.
We conclude that \Eqref{eq: var_thresh} holds for most of the layers when fine-tuning transformers.
Thus, we expect our \sas has better performance than CRS for adapting transformer-based models, which is later experimentally verified in Section \ref{sec: exp}.

\subsection{Compress \texorpdfstring{$\mm$}{~} in Transformers with \texorpdfstring{$\sas$}{~}}
\label{sec: where}

Previous work has shown that unbiasedness of the estimated gradient is crucial for the proper convergence of stochastic gradient descent \citep{chmielminimum, chmiel2021logarithmic, actnn, gact}.
As shown in Section \ref{sec: memory}, we have three \mm in the linear layer.
Below we investigate how to replace \mm with its approximated version in a way that the estimated gradient is unbiased.

\noindent
\textbf{Unbiasedness.}
Previous work has shown that to ensure the unbiasedness of the gradient, the approximation can only be applied during the backward pass \citep{actnn, liu2022rsc, adelman2021faster}.
The rationale behind this conclusion is that we have $\E[f(x)]\neq f(\E[x])$ for any non-linear function $f(\cdot)$, e.g., $\E[x^2]\neq \E^2[x]$.
Thus if we replace the forward \mm in \Eqref{eq: fwd}, even when the approximation method gives an unbiased estimation, i.e., $\E[\hat{g}(\mH, \mW)]=\mH\mW=\mZ$, 
the output activations (e.g., $\gelu(\mZ)$) are still biased since the activation function is non-linear, namely,
\begin{equation}
    \gelu(\hat g(\mH,\mW))=\gelu(\E[\mZ])\neq \E[\gelu(\mZ)].\nonumber
\end{equation}

To ensure the unbiasness of gradient and reduce the memory usage of storing $\mH$, as shown in the example of Figure \ref{fig: deploy_linear}, \textbf{we only replace \mm in the backward pass with its approximation (e.g., \Eqref{eq: bwd_w}), while leaving the forward one unchanged (e.g., \Eqref{eq: fwd}).}
\textbf{We show in Appendix \ref{app: unbias_grad} that the estimated weight gradient is unbiased in this case.}

\begin{figure}[t!]
    \centering
    \includegraphics[width=0.9\linewidth]{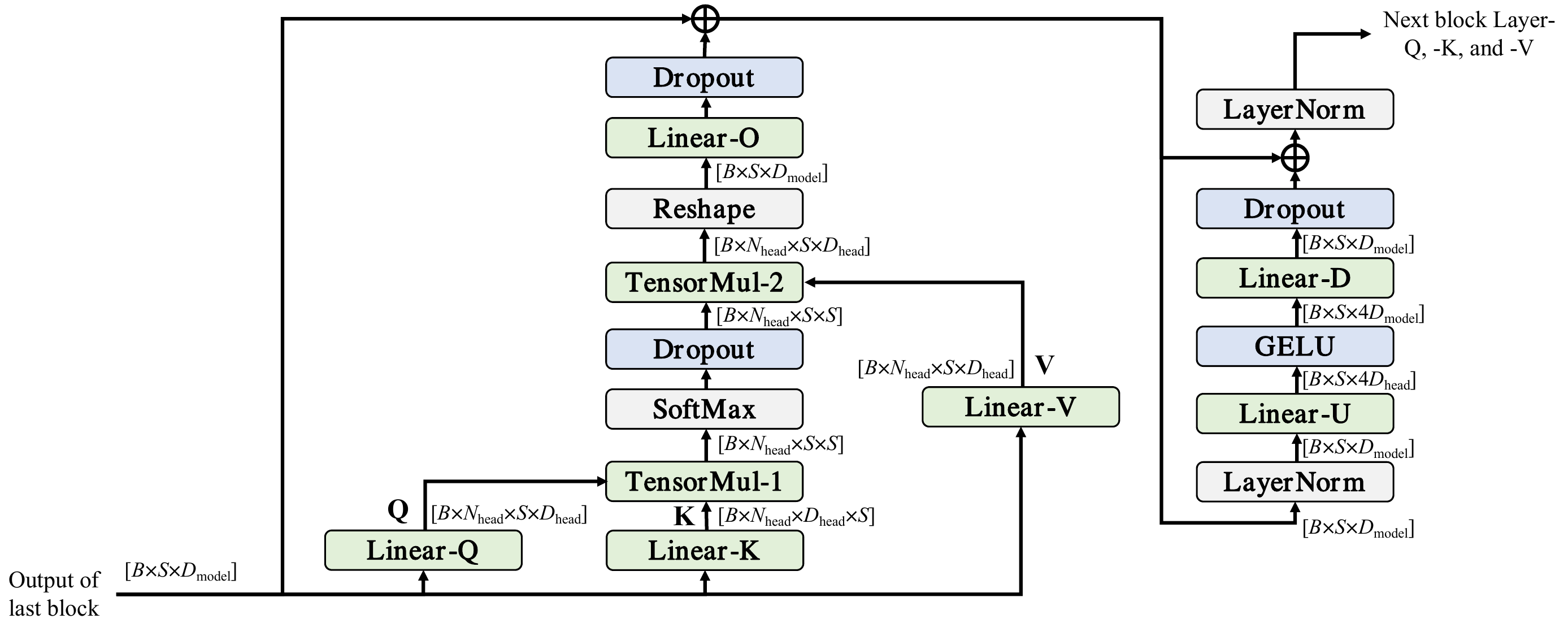}
    \caption{
    The diagram of a single Transformer block. 
        The shape of activations is annotated, where $B, S, D_{\text{model}}$,  $N_{\text{head}}$, and $D_{\text{head}}$ are the batch size, sequence length, hidden size, number of attention heads, and head dimension, respectively. \Algnameabbr{} can be applied to the operators in green; the activation maps of operators in blue can be losslessly compressed; and those in gray are not compressed in this paper. The idea of this figure is inspired by \citep{andoorveedu2022tempo}.}
    \label{fig:model_config}
        \vspace{-1em}
\end{figure}

\begin{wrapfigure}{r}{0.5\textwidth}
 \vspace{-2em}
  \begin{center}
    \includegraphics[width=0.45\textwidth]{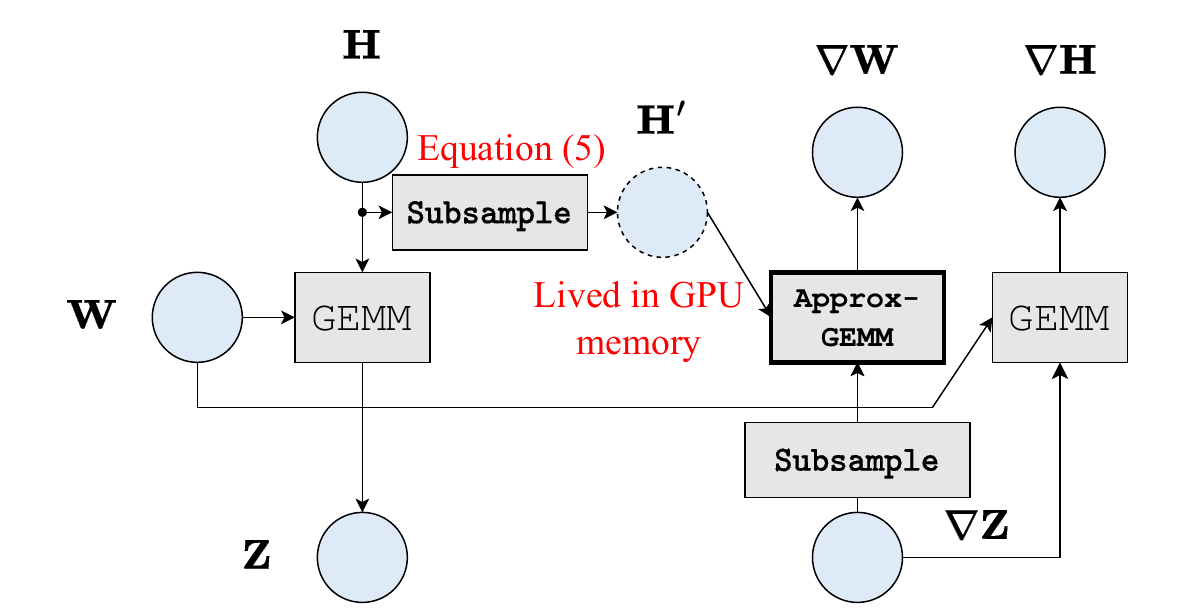}
  \end{center}
    \vspace{-1em}
  \caption{The illustration of how to deploy \sas to linear layers.
  We only replace \mm in \Eqref{eq: bwd_w} with its approximated version using \sas.
  The pseudocode is given in Appendix \ref{app: implementation} Algorithm \ref{algo: approx_q_linear}.}
  \vspace{-1em}
  \label{fig: deploy_linear}
\end{wrapfigure}

\paragraph{Implementation.}
Here we present how we implement \sas in \Eqref{eq: our_estimator} in practice.
For the linear layer, as we analyzed, we only replace \mm in \Eqref{eq: bwd_w} with its approximated version.
In this case, $\mX$ and $\mY$ in \Eqref{eq: our_estimator} are activation $\mH^\top$ and output gradient $\nabla\mZ$, respectively.
Given the total column-row pair budget $k$, the \textbf{first} step is to build the deterministic index set $\mathcal{C}$, where each element is summed explicitly without sampling.
Note that $\mathcal{C}$ is a set of indices with the highest probabilities in \Eqref{eq: col_row_norm}. Thus, to build $\mathcal{C}$, we only need to determine its size, denoted as $|\mathcal{C}|$, which minimizes the variance of the estimator.
As Theorem \ref{theo: var} suggested, we set $|\mathcal{C}|=\min_{|\mathcal{C}|\in\{0,\cdots,k\}}\frac{1 - \sum_{c\in\mathcal{C}}p_c}{k-|\mathcal{C}|}$.
The \textbf{second} step is to sample $k-|\mathcal{C}|$ column-row indices from the remaining distribution $\mathcal{P}^{\mathcal{D}\backslash\mathcal{C}}$ to obtain the set $\mathcal{C}_\text{stoc}$, where $|\mathcal{C}_\text{stoc}|=k-|\mathcal{C}|$.
The \textbf{third} step is to build sub-sampled $\mH'$ with only rows from $\mathcal{C}\cup\mathcal{C}_\text{stoc}$.
Note that for rows in $\mH'$ from $\mathcal{C}_\text{stoc}$, we need to normalize it by $\frac{1 - \sum_{c\in\mathcal{C}}p_c}{k-|\mathcal{C}|}$ according to \Eqref{eq: our_estimator}.
We illustrate the above process in Figure \ref{fig: deploy_linear}.
The pseudocode to Appendix \ref{app: implementation} Algorithm \ref{algo: approx_q_linear}.
We leverage Liger kernel \citep{hsu2024ligerkernelefficienttriton} to further speedup our fine-tuning pipeline.

\paragraph{Scope.}
Here we show which operation can be replaced with its approximation version.
As shown in Figure \ref{fig:model_config}, the transformer is mainly consisted of linear layer, TensorMul, and other operations (e.g., GeLU, Dropout, LayerNorm).
TensorMul in Figure \ref{fig:model_config} refers to the multiplication between two four-dimensional tensors.
Our \Algnameabbr{} \emph{can be applied to Linear-Q, -K, -V, -O, -U, -D, TensorMul-1, and TensorMul-2~(in green).}
The activations of Dropout and GELU operations~(in blue) can be losslessly compressed.
The Softmax and LayerNorm operators~(in gray) remain unchanged.


\section{Related Work and Discussion}
Due to the page limit, we discuss the related work on approximated matrix multiplication and activation compression.  
Other related topics,
e.g., parameter-efficient fine-tuning and gradient checkpointing, can be found in
Appendix \ref{app: related_work}. 
We also discuss the limitation and potential negative social impact in Appendix \ref{app: related_work}.

\textbf{Approximated Matrix Multiplication.}
In the context of neural networks, approximated matrix multiplication methods can be broadly categorized into two main groups:
(1) Butterfly-based methods \citep{chen2021pixelated, dao2022monarch} replace dense weight matrices with butterfly matrices.
We note that they focus on the weight matrix and are orthogonal to our research, as we concentrate on sub-sampling the activation matrix.
(2) Column-row sampling (CRS) methods \citep{mcmatmul, adelman2021faster, liu2022rsc}  select important rows and columns from the input matrices and perform the multiplication on the sampled matrix. 
Our work is closely aligned with this second research line.
\citep{adelman2021faster, liu2022rsc} share similarities with our research in terms of utilizing CRS for approximating matrix multiplication within neural networks. 
The main distinction lies in how to select the column-row pairs. 
Specifically, \citep{adelman2021faster} deterministically selects column-row pairs without scaling, whereas our estimator divides the column-row pairs into a deterministic component and a stochastic component.
As we analyzed, selecting column-row pairs deterministically is biased.
Later we show that this approach may cause a significant accuracy drop (\textbf{``Deterministic'' in Figure \ref{fig:ablation_exp}}).

\textbf{Activation Quantization.}
The activation quantization methods focus on quantizing the activation into low numerical precision numbers, e.g., 8-bit integers \citep{actnn, gact, exact, wang2022towards, wang2022bed}.
Here we discuss the difference between these two works in terms of compression ratios.
According to Table 5 in \citep{gact}, when GACT is combined with Swapping, i.e., offloading quantized activation to main memory, it achieves a peak memory usage compression rate of $1.73\times$ for Bert-Large.
Our work also compresses the activation, but in a different way.
\emph{We emphasize that our work is orthogonal to activation quantization in the sense that our work essentially reduces the dimension of activation.}
This distinction allows our method to be readily combined with activation quantization techniques, offering the potential for even more aggressive compression.

\section{Experiments}
\label{sec: exp}

In this section, we design experiments to answer the following research questions: 
\textbf{RQ1:} How effective is \Algnameabbr{} in terms of accuracy with reduced memory usage?
\textbf{RQ2:} How sensitive is \Algnameabbr{} affected by its key hyper-parameters?
\textbf{RQ3:} 
\sas contains two parts, i.e., the deterministic summation part and the statistical sampling part. Are they both necessary?
\textbf{RQ4:} How is the fine-tuning speed affected by \sas?

\subsection{Experiment Setup}

\paragraph{Datasets and Evaluation Protocol.} 
Following most of the previous work, we adopt GLUE benchmark~\citep{wang2018glue} to evaluate the effectiveness of different methods, including the CoLA, SST-2, MRPC, QQP, MNLI, QNLI, RTE, and STS-B datasets.
For the SST-2, MNLI, QNLI, and RTE datasets, we report the validation accuracy. For CoLA, we use Matthew's correlation as the evaluation metric. The F1 score is reported for both MRPC and QQP tasks, while the Pearson-Spearman correlation is used to evaluate the performance on the STS-B dataset.
To evaluate the memory usage, we report the peak GPU memory usage and compression rate during the fine-tuning process with Huggingface API \citep{huggingface}.



\textbf{Compared Methods and Adopted Models.} 
We consider three methods to compare in this paper: Full fine-tuning~(Full), LoRA~\citep{hu2021lora}, and Ladder Side-tuning~(LST)~\citep{lst}. Specifically, \textbf{Full} tunes all of the parameters in the model to provide an upper bound of accuracy; \textbf{LoRA} inserts trainable low-rank matrices into the model to parameterize the
weights’ changes; \textbf{LST} injects a trainable ladder side structure.
Since \Algnameabbr{} essentially replace the linear operation with approximated one, \textbf{we emphasize that our \Algnameabbr{} is compatible with all these three compared methods, i.e., they can be combined together towards smaller memory usage.}
For the backbone model, we follow the previous work~\citep{lst, adapter, hu2021lora} to adopt the Bert-Base \citep{devlin2018bert}, Bert-Large, T5-Base, T5-Large, and T5-3B~\citep{raffel2020exploring} for evaluating the effectiveness of different methods.

\textbf{Hyperparameter Settings.}  
For \Algnameabbr{}, it only has one hyperparameter $k$, which controls the column-row pair budget. 
We assign the same $k$ to all replaceable linear operations in the model.
We consider the normalized column-row pair budget $k/|\mathcal{D}| \in \{0.3, 0.1\}$, which are denoted as \Algnameabbr{}@0.3 and \Algnameabbr{}@0.1, respectively.
We also consider the combination of \Algnameabbr{} and LoRA to further reduce the memory cost of both optimizer and activations.
The detailed hyperparameters are given in Appendix~\ref{appendix:hyper_setting}.
All reported results are averaged over three random trials.


\begin{table}[t!]
    \centering
    \captionsetup{skip=5pt} 
    \caption{The GLUE benchmark results with T5 and Bert at different scales.}
    \label{tab:glue_eval}
\resizebox{\textwidth}{!}{
\begin{tabular}{llccccccccc}
\hline
Model                       & Method           & CoLA          & SST-2         & MRPC          & QQP           & MNLI          & QNLI          & RTE           & STS-B         & AVG  \\ \hline
\multirow{4}{*}{BERT-Base}  & Full             & 60.9$\pm$\scriptsize{1.89} & 92.2$\pm$\scriptsize{0.34 }& 87.9$\pm$\scriptsize{0.46} & 87.8$\pm$\scriptsize{0.01} & 83.7$\pm$\scriptsize{0.05} & 90.7$\pm$\scriptsize{0.14} & 66.4$\pm$\scriptsize{0.36} & 88.1$\pm$\scriptsize{0.27} & 82.2 \\
                            & LoRA             & 61.6$\pm$\scriptsize{0.25 }& 91.7$\pm$\scriptsize{0.17 }& 90.0$\pm$\scriptsize{0.34} & 86.9$\pm$\scriptsize{0.1}  & 83.6$\pm$\scriptsize{0.02} & 90.8$\pm$\scriptsize{0.17} & 68.2$\pm$\scriptsize{0.36} & 87.6$\pm$\scriptsize{0.52} & 82.6 \\
                            & WTA-CRS@0.3      & 60.7$\pm$\scriptsize{0.89} & 90.2$\pm$\scriptsize{0.06} & 87.0$\pm$\scriptsize{0.14} & 87.5$\pm$\scriptsize{0.03} & 83.4$\pm$\scriptsize{0.06} & 90.4$\pm$\scriptsize{0.07} & 65.9$\pm$\scriptsize{0.18} & 89.3$\pm$\scriptsize{0.2}  & 81.8 \\
                            & LoRA+WTA-CRS@0.3 & 61.5$\pm$\scriptsize{1.08} & 89.6$\pm$\scriptsize{0.52 }& 89.6$\pm$\scriptsize{0.09} & 86.3$\pm$\scriptsize{0.02} & 82.8$\pm$\scriptsize{0.35} & 90.6$\pm$\scriptsize{0.16} & 67.9$\pm$\scriptsize{0.72} & 87.3$\pm$\scriptsize{0.7}  & 81.9 \\ \hline
\multirow{5}{*}{T5-Base}    & Full             & 60.1$\pm$\scriptsize{0.37} & 94.9$\pm$\scriptsize{0.29 }& 91.5$\pm$\scriptsize{0.29} & 88.5$\pm$\scriptsize{0.07} & 87.0$\pm$\scriptsize{0.1}  & 93.3$\pm$\scriptsize{0.03} & 79.4$\pm$\scriptsize{0.78} & 90.6$\pm$\scriptsize{0.14} & 85.7 \\
                            & LoRA             & 60.6$\pm$\scriptsize{0.94} & 60.6$\pm$\scriptsize{0.94 }& 92.2$\pm$\scriptsize{0.31} & 87.4$\pm$\scriptsize{0.06} & 86.2$\pm$\scriptsize{0.06} & 93.4$\pm$\scriptsize{0.03} & 80.6$\pm$\scriptsize{0.74} & 90.7$\pm$\scriptsize{0.05} & 85.7 \\
                            & LST              & 55.5$\pm$\scriptsize{0.24} & 94.0$\pm$\scriptsize{0.17 }& 91.1$\pm$\scriptsize{0.18} & 87.4$\pm$\scriptsize{0.03} & 85.7$\pm$\scriptsize{0.13} & 93.4$\pm$\scriptsize{0.0 } & 72.7$\pm$\scriptsize{0.54} & 90.4$\pm$\scriptsize{0.06} & 83.8 \\
                            & WTA-CRS@0.3      & 60.9$\pm$\scriptsize{0.52} & 94.8$\pm$\scriptsize{0.14 }& 91.1$\pm$\scriptsize{0.35} & 88.0$\pm$\scriptsize{0.11} & 86.3$\pm$\scriptsize{0.02 }& 93.1$\pm$\scriptsize{0.07} & 78.7$\pm$\scriptsize{0.59} & 90.5$\pm$\scriptsize{0.05} & 85.4 \\
                            & LoRA+WTA-CRS@0.3 & 60.0$\pm$\scriptsize{0.51} & 94.4$\pm$\scriptsize{0.16 }& 92.0$\pm$\scriptsize{0.38} & 87.3$\pm$\scriptsize{0.04} & 85.6$\pm$\scriptsize{0.08} & 93.2$\pm$\scriptsize{0.01} & 80.1$\pm$\scriptsize{1.02} & 90.4$\pm$\scriptsize{0.06} & 85.4 \\ \hline
\multirow{4}{*}{BERT-Large} & Full             & 66.8$\pm$\scriptsize{0.31} & 93.5$\pm$\scriptsize{0.29 }& 89.5$\pm$\scriptsize{0.26 }& 88.5$\pm$\scriptsize{0.03} & 86.4$\pm$\scriptsize{0.19} & 92.1$\pm$\scriptsize{0.24} & 72.6$\pm$\scriptsize{0.36} & 90.2$\pm$\scriptsize{0.76} & 85.0 \\
                            & LoRA             & 65.9$\pm$\scriptsize{0.27} & 93.8$\pm$\scriptsize{0.17 }& 90.8$\pm$\scriptsize{0.37} & 87.6$\pm$\scriptsize{0.08} & 85.9$\pm$\scriptsize{0.05} & 92.0$\pm$\scriptsize{0.2}  & 71.3$\pm$\scriptsize{0.18} & 90.3$\pm$\scriptsize{0.09} & 84.7 \\
                            & WTA-CRS@0.3      & 64.7$\pm$\scriptsize{0.44} & 93.5$\pm$\scriptsize{0.0  }& 89.3$\pm$\scriptsize{0.39} & 88.2$\pm$\scriptsize{0.04} & 85.2$\pm$\scriptsize{0.03} & 91.9$\pm$\scriptsize{0.12} & 73.8$\pm$\scriptsize{0.54} & 90.4$\pm$\scriptsize{0.02} & 84.6 \\
                            & LoRA+WTA-CRS@0.3 & 66.0$\pm$\scriptsize{0.33} & 93.3$\pm$\scriptsize{0.29 }& 89.7$\pm$\scriptsize{1.32} & 87.6$\pm$\scriptsize{0.02} & 86.0$\pm$\scriptsize{0.07} & 91.9$\pm$\scriptsize{0.14} & 72.4$\pm$\scriptsize{0.17} & 89.7$\pm$\scriptsize{0.04} & 84.6 \\ \hline
\multirow{5}{*}{T5-Large}   & Full             & 61.3$\pm$\scriptsize{1.01} & 96.3$\pm$\scriptsize{0.0 } & 93.4$\pm$\scriptsize{0.13} & 89.7$\pm$\scriptsize{0.01} & 89.8$\pm$\scriptsize{0.07} & 94.2$\pm$\scriptsize{0.05} & 85.3$\pm$\scriptsize{0.17} & 91.8$\pm$\scriptsize{0.08} & 87.7 \\
                            & LoRA             & 63.3$\pm$\scriptsize{0.26} & 96.4$\pm$\scriptsize{0.14} & 93.5$\pm$\scriptsize{0.16} & 88.5$\pm$\scriptsize{0.03} & 89.5$\pm$\scriptsize{0.05} & 94.3$\pm$\scriptsize{0.07} & 84.2$\pm$\scriptsize{0.68} & 91.7$\pm$\scriptsize{0.13} & 87.7 \\
                            & LST              & 59.9$\pm$\scriptsize{0.77} & 95.8$\pm$\scriptsize{0.06} & 91.8$\pm$\scriptsize{0.08} & 88.4$\pm$\scriptsize{0.01} & 88.7$\pm$\scriptsize{0.05} & 94.2$\pm$\scriptsize{0.02} & 82.5$\pm$\scriptsize{0.18} & 91.4$\pm$\scriptsize{0.07} & 86.6 \\
                            & WTA-CRS@0.3      & 60.9$\pm$\scriptsize{1.18} & 96.3$\pm$\scriptsize{0.25} & 93.6$\pm$\scriptsize{0.57 }& 89.3$\pm$\scriptsize{0.04} & 89.5$\pm$\scriptsize{0.12} & 94.1$\pm$\scriptsize{0.03} & 84.4$\pm$\scriptsize{0.34} & 91.3$\pm$\scriptsize{0.05} & 87.4 \\
                            & LoRA+WTA-CRS@0.3 & 62.9$\pm$\scriptsize{1.19} & 96.2$\pm$\scriptsize{0.05} & 93.6$\pm$\scriptsize{0.47} & 88.3$\pm$\scriptsize{0.02} & 89.2$\pm$\scriptsize{0.08} & 94.0$\pm$\scriptsize{0.07} & 83.9$\pm$\scriptsize{0.95} & 91.3$\pm$\scriptsize{0.03} & 87.4 \\ \hline
\multirow{2}{*}{T5-3B}      & LoRA             & 70.1$\pm$\scriptsize{0.37} & 96.8$\pm$\scriptsize{0.29} & 94.0$\pm$\scriptsize{0.27 }& 89.9$\pm$\scriptsize{0.0}  & 91.0$\pm$\scriptsize{0.14} & 95.6$\pm$\scriptsize{0.05} & 85.9$\pm$\scriptsize{0.36} & 92.9$\pm$\scriptsize{0.08} & 89.5 \\
                            & LoRA+WTA-CRS@0.3 & 71.4$\pm$\scriptsize{0.35} & 96.4$\pm$\scriptsize{0.06} & 94.6$\pm$\scriptsize{0.39} & 90.0$\pm$\scriptsize{0.05} & 91.0$\pm$\scriptsize{0.06} & 95.6$\pm$\scriptsize{0.12} & 86.3$\pm$\scriptsize{0.36} & 92.9$\pm$\scriptsize{0.09} & 89.8 \\ \hline
\end{tabular}
}
\end{table}

\begin{table}[h!]
    \centering
    \footnotesize
    \captionsetup{skip=5pt} 
    \caption{Peak memory usage (GB) and compression rate of fine-tuning T5-Base and -Large. 
    We measure the memory usage on a single NVIDIA A100 (80GB) GPU.
    For T5-3B, since it is trained using multi-GPUs with data parallel. 
    We instead report the maximum batch size in Figure \ref{fig:mem_vs_bs} for it.
    }
    \label{tab:compression}
\resizebox{\textwidth}{!}{
    \begin{tabular}{lccccccc}
    \toprule
         & FP & LoRA & LST & \Algnameabbr{}@0.3 & \Algnameabbr{}@0.1 & LoRA+\Algnameabbr{}@0.3 & LoRA+\Algnameabbr{}@0.1 \\
    \midrule
         T5-Base & 17.66~(1$\times$) & 13.84~(1.3$\times$) & 5.50~(3.2$\times$) & 8.44~(2.1$\times$) & 7.30~(2.4$\times$) & 6.50~(2.7$\times$) & 5.44~(3.2$\times$) \\
         T5-Large & 45.85~(1$\times$) & 36.83~(1.2$\times$) & 14.85~(3.1$\times$) & 21.58~(2.1$\times$) & 18.46~(2.5$\times$) & 17.44~(2.6$\times$) & 14.64~(3.13$\times$) \\
    \bottomrule
    \end{tabular}
    }
    \vspace{-1em}
\end{table}

\subsection{Accuracy versus Memory Usage (RQ1)}

To answer \textbf{RQ1}, 
we first analyze the trade-off between the model performance and memory saving.
The evaluation results and peak memory usage are given in Tables~\ref{tab:glue_eval} and~\ref{tab:compression}, respectively.
We observe:


\ding{182} \emph{\sas achieves a superior trade-off between accuracy and memory usage compared to baselines. Specifically, \sas has negligible accuracy drop, while the peak memory usage is reduced by $2.1\times\sim2.7\times$}  (when combined with LoRA).

\begin{wrapfigure}{r}{0.32\textwidth}
  \begin{minipage}[t]{0.95\linewidth}
    \includegraphics[width=1.0\textwidth]{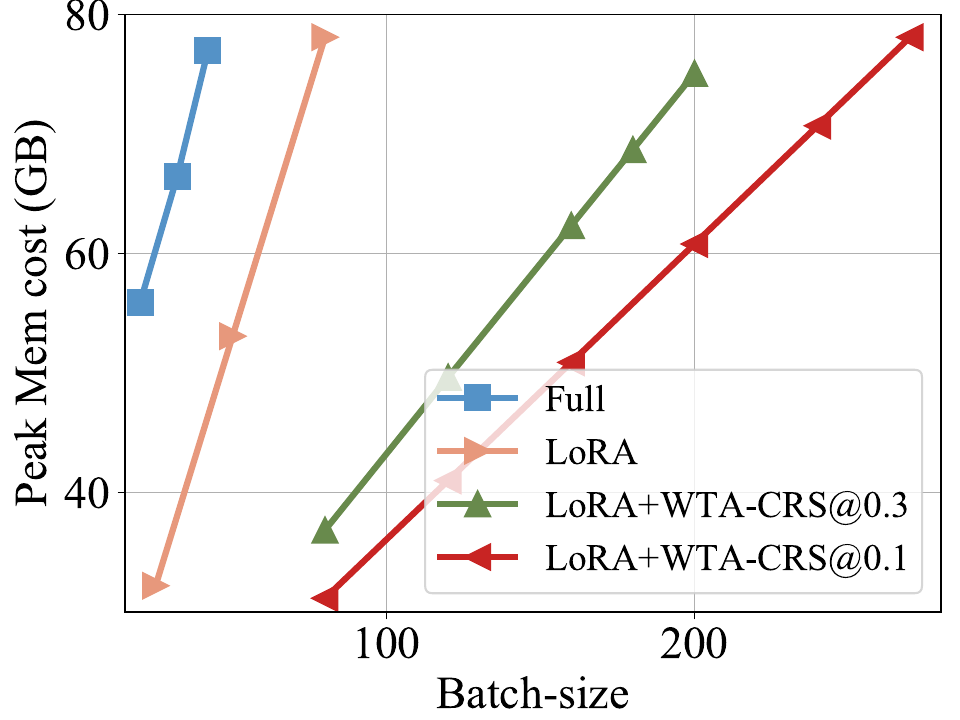}
    \caption{Peak memory usage versus the maximum batch size of T5-3B. More similar results are shown in Appendix \ref{app:more_exp_speed}.}
    \label{fig:mem_vs_bs}
  \end{minipage}
    \vspace{-1em}
\end{wrapfigure}

As we analyzed, LoRA mainly reduces the memory of optimizer states.
Thus, although it has negligible accuracy drop, it can only achieve $\sim1.3\times$ peak memory saving.
LST can reduce memory usage up to $3\times$, but its accuracy drop is much larger than LoRA and \sas.
Since \sas executes at the operation level and focuses on the activations,
we further combine LoRA with \sas to reduce memory usage more aggressively.
When combing with LoRA, \sas achieves $2.7\times$ memory usage saving with almost no accuracy drop.
To fully tune T5-3B, it requires 37.7GB of memory and relies on a GPU with a capacity of 40GB or higher, e.g. RTX8000, A100, or A40. On the other hand, LoRA+\sas only requires 21.6GB of memory for finetuning with a mini-batch size of $32$, which can run on a GPU with 24GB memory, e.g. RTX3090Ti or A5000. We have experimentally confirmed this conclusion.
Under the same hardware, \sas enables the tuning of larger models, resulting in improved down-streaming task performance. 
Thus as shown in Figure \ref{fig: accuracy-memory_trade-off}, \ding{183} \emph{under the similar memory budget, \sas outperforms the other methods in terms of the accuracy.}

Also, according to Figure \ref{fig:mem_vs_bs}, for T5-3B, LoRA itself can enable $1.9\times$ larger batch size. 
\ding{184} \emph{When combined with LoRA, \sas enables $4.8\times$ ($k\!=\!0.3|\mathcal{D}|$) to $6.4\times$ ($k\!=\!0.1|\mathcal{D}|$) larger batch-size.}

\paragraph{Influence of Row-column Pairs Budget (RQ2).}
As we analyzed in Section \ref{sec: where}, \sas only have one hyperparameter, i.e., the total column-row pair budgets $k$.  
We conduct the ablation study with different budget $k$ in Figure \ref{fig:hyper_exp}.
We observe that \ding{185} \emph{It has almost no accuracy drop when $k=0.3|\mathcal{D}|$.
And the accuracy drop is about $1\%$ when $k=0.1|\mathcal{D}|$}.
\emph{Notably, for T5-3B, the accuracy drop is only $0.4\%$ when  $k=0.1|\mathcal{D}|$, which is much smaller than T5-Base and T5-Large.}
This suggests that larger models are more compressible because they have more redundant activations, which is consistent with previous observations \citep{li2020train}.


    

    
    

\begin{figure}[h!]
    \centering
    \begin{subfigure}[h]{0.32\linewidth}
      \includegraphics[width=1\linewidth]{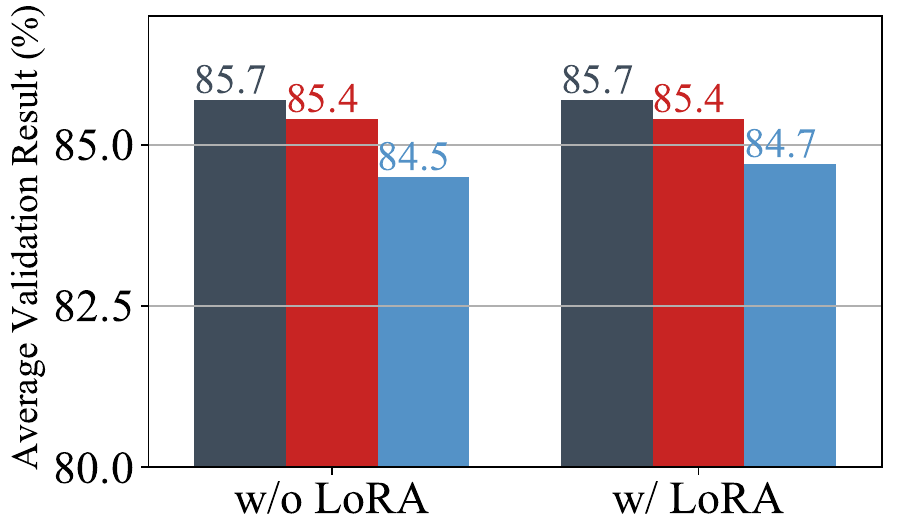}
      \caption{T5-Base}
    \end{subfigure}%
    \begin{subfigure}[h]{0.32\linewidth}
      \includegraphics[width=1\linewidth]{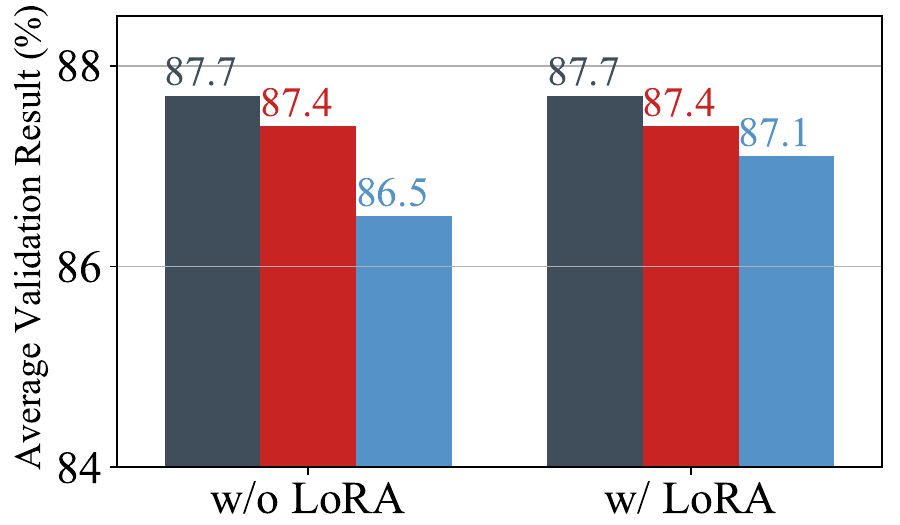}
      \caption{T5-Large}
    \end{subfigure}%
    \begin{subfigure}[h]{0.32\linewidth}
      \includegraphics[width=1\linewidth]{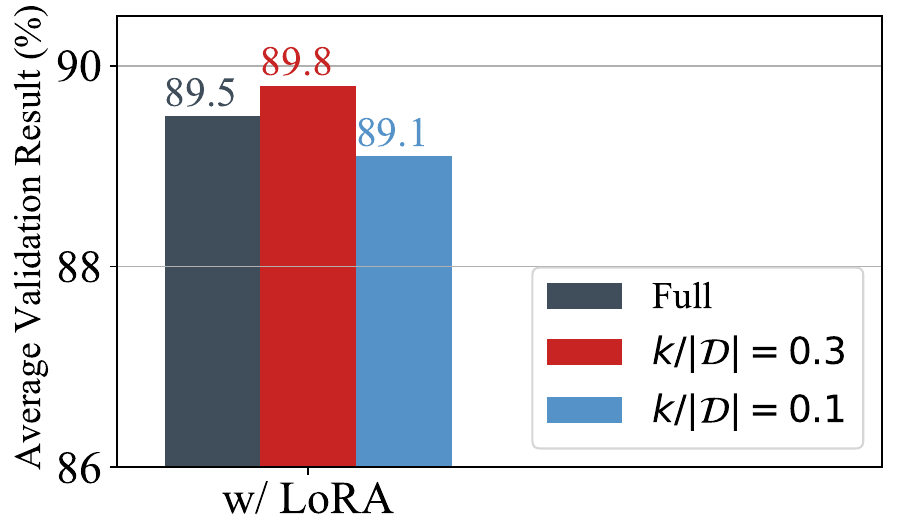}
      \caption{T5-3B}
    \end{subfigure}%
    \vspace{-.5em}
    \caption{Average validation results on GLUE dataset of \sas with varying budgets.}
    \vspace{-1em}
    \label{fig:hyper_exp}
\end{figure}

\subsection{Ablation Study (RQ3 and RQ4)}

\begin{figure}
    \centering
    \begin{subfigure}[h]{0.34\linewidth}
      \includegraphics[width=1\linewidth]{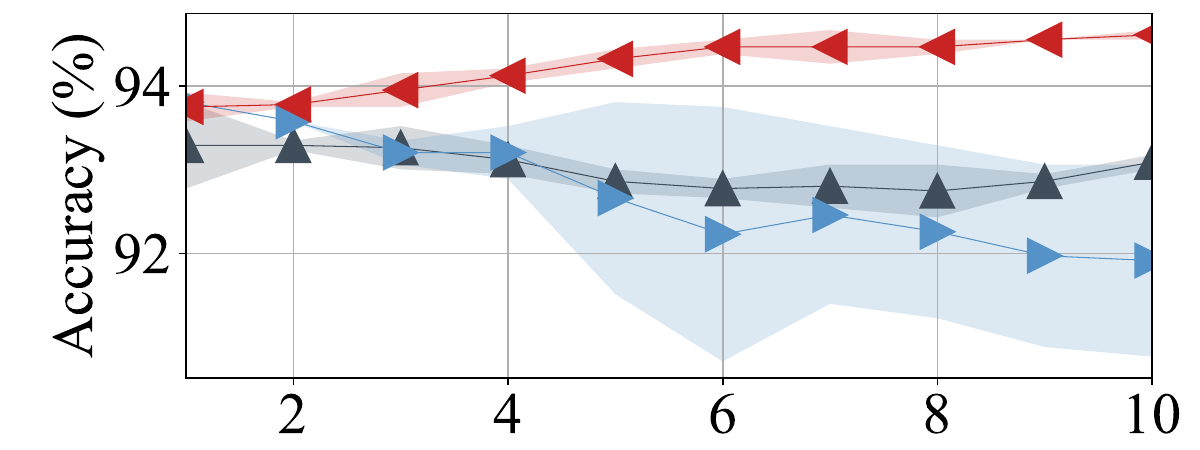}
          \vspace{-1.5em}
      \caption{SST2}
    \end{subfigure}%
    \begin{subfigure}[h]{0.34\linewidth}
      \includegraphics[width=1\linewidth]{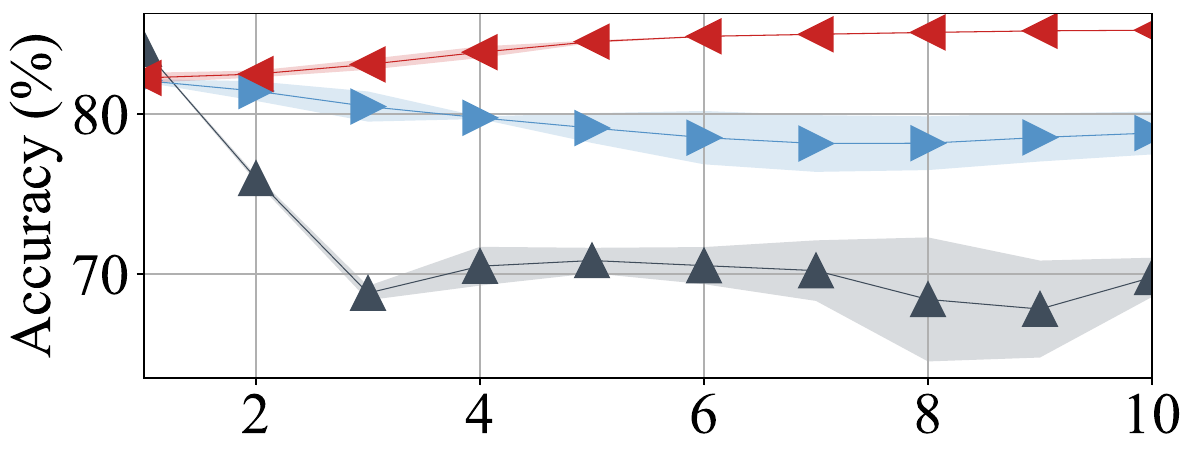}
          \vspace{-1.5em}
      \caption{MNLI}
    \end{subfigure}%
    \begin{subfigure}[h]{0.34\linewidth}
      \includegraphics[width=1\linewidth]{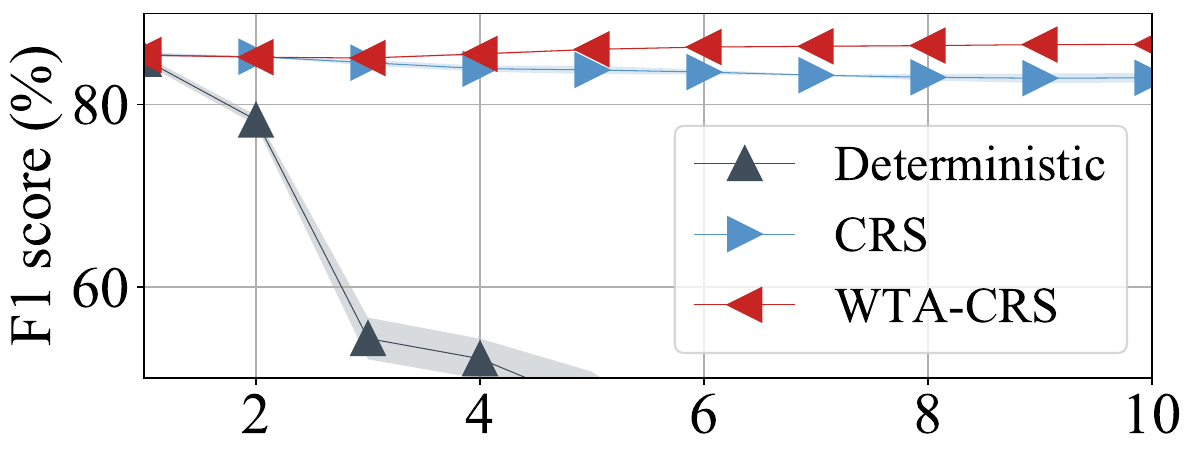}
          \vspace{-1.5em}
      \caption{QQP}
    \end{subfigure}%
    \vspace{-.5em}
    \caption{Validation results of T5-Base with different methods.}
    \vspace{-1.5em}
    \label{fig:ablation_exp}
\end{figure}

To answer \textbf{RQ3}, \sas is compared with two compositional methods to demonstrate its superiority. Namely, 
(1) the \textbf{Deterministic} method selects row-column pairs with top $k$ probability of Equation~(\ref{eq: col_row_norm}). 
We note that this is the estimator proposed in \citep{adelman2021faster}.
(2) The \textbf{CRS} method follows Equation~(\ref{eq: col_row_norm}) to sample the row-column pairs. 
All methods are deployed to \mm in the backward pass, while leaving the forward one unchanged.
The experiments are conducted on the training of T5-base language model on the SST2, MNLI, and QQP datasets; 
The column-row pair budget takes $k/|\mathcal{D}| = 0.1$ for all methods. 
The validation accuracy versus training epoch is given in Figure~\ref{fig:ablation_exp}.
We observe:

\ding{186} 
\emph{\sas outperforms all compared methods, especially as the training epoch grows.} 
The deterministic selection of top $k$ column-row pairs suffers from accumulation of bias error that ultimately results in a failure of convergence.
For CRS, it also enables the unbiased weight gradient. However, as we theoretically and experimentally analyzed in Theorem~\ref{eq: var_thresh} and Figure~\ref{fig: exp_assump_analysis_th2_direct}, it is worse than \sas due to larger variance.
In summary, both the deterministic and  stochastic parts contribute to the effectiveness of \sas, which is consistent with our theoretical analysis.


\begin{wrapfigure}{r}{0.52\textwidth}
\vspace{-1em}
  \begin{subfigure}[h]{0.51\linewidth}
    \centering
    \includegraphics[width=\linewidth]{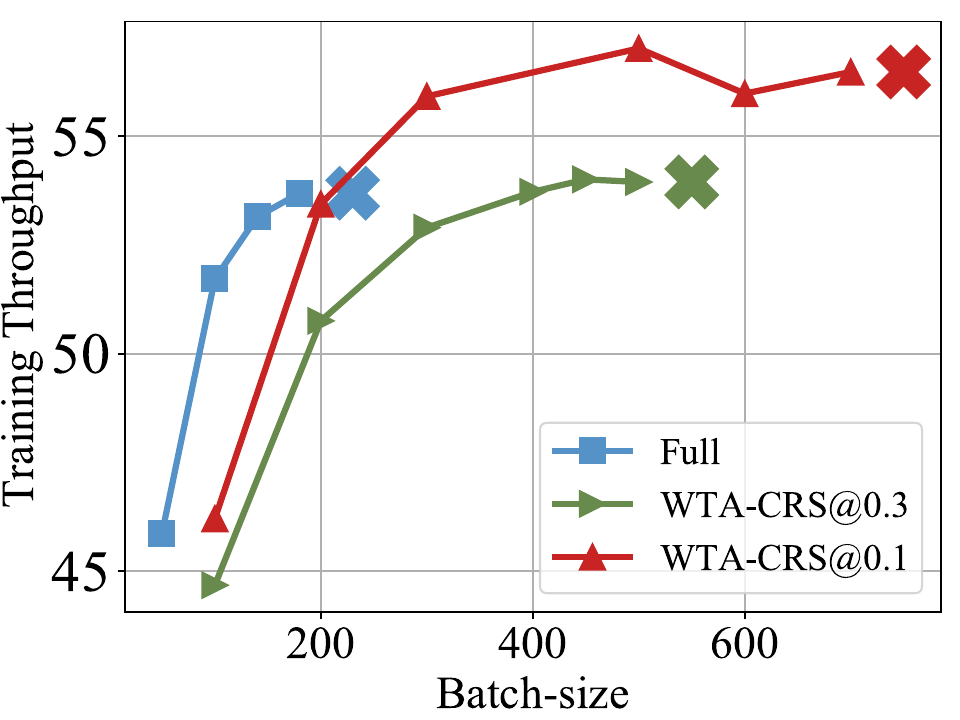}
            \vspace{-1.5em}
    \caption{T5-Large}
    \label{fig:mot_quant}
  \end{subfigure}%
  \begin{subfigure}[h]{0.51\linewidth}
    \centering
    \includegraphics[width=\linewidth]{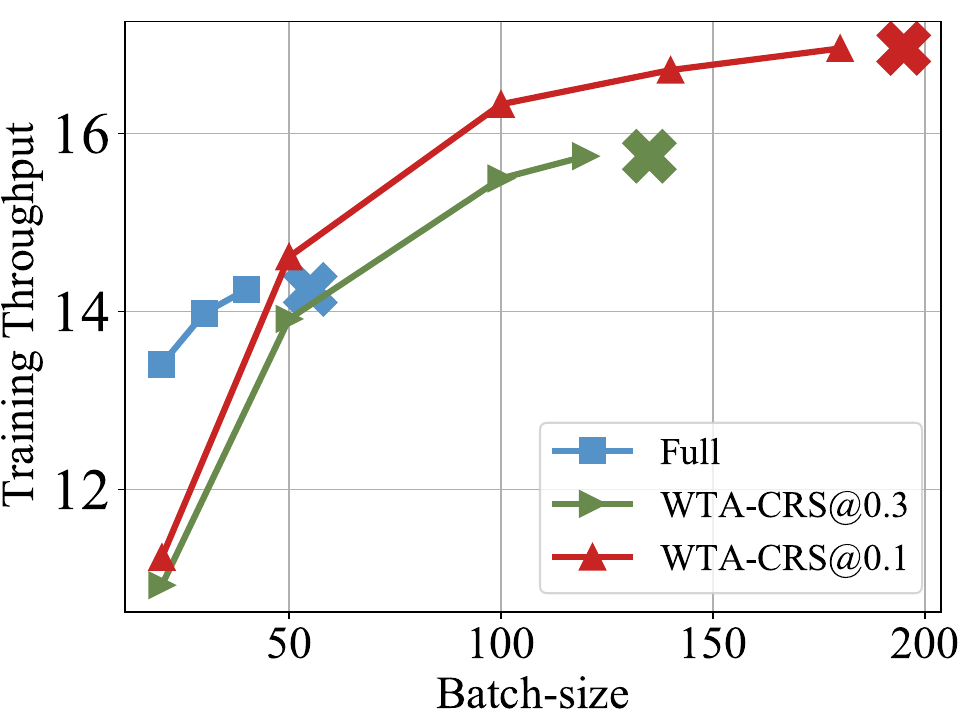}
            \vspace{-1.5em}
        \caption{T5-3B}
    \label{fig:mot_sparse}
  \end{subfigure}
        \vspace{-.5em}
  \caption{Batch size versus training throughput (sentences/sec) with different methods, where the sequential length is 128. The hardware is one single NVIDIA-A100 (80GB).}
    \vspace{-1.5em}
\label{fig:throughput}
\end{wrapfigure}

\paragraph{The speed of \sas (RQ4).}

The configuration of computational infrastructure is given in Appendix~\ref{appendix:hardware}.
We note that \sas does not add any extra parameters to the model.
Thus, \sas only affect the fine-tuning speed, without affecting the inference speed.
The memory taken by the activations is proportional to the batch size and sequential length. 
Below we analyze how the fine-tuning speed affected by \sas.
As we analyzed in Appendix \ref{app: related_work} limitation, the current implementation is not heavily optimized and thus the execution time of \sas is still slower than the original linear operation (details are shown in Appendix \ref{app:more_exp_speed}).
However, under the same hardware, a reduction in activation memory enables the use of larger batch sizes, thereby improving training speed due to increased GPU utilization \citep{largebatchsizetraining, lst}.
As we analyzed in Figure \ref{fig:mem_vs_bs}, \sas can enlarge the available batch size by up to $4.8\times$ larger.
This enhancement is expected to result in a acceleration of the training speed. 
To illustrate this relationship, Figure \ref{fig:throughput} presents a visualization of batch size against training throughput (sentences per second) for both T5-Large and T5-3B models.
We observe that \ding{187}
\emph{\sas enables faster training speed under the same hardware.}
Specifically, on the T5-Large model, \Algnameabbr{}@0.1 shows $1.08\times$ higher training throughput; and on the T5-3B model, \Algnameabbr{}@0.3 and \Algnameabbr{}@0.1 achieve $1.14\times$ and $1.21\times$ higher training throughput, respectively.

\section{Conclusion}
In this paper, we propose \sas, a new unbiased estimator for matrix production with reduced variance.
We theoretically and experimentally show when and why the estimator is better than the traditional unbiased estimator in terms of the variance.
In the context of adapting transformers, it almost has no accuracy drop while reducing the peak memory usage by up to $2.7\times$, and it enables a $6.4\times$ larger batch size, which in return resulting in $1.2\times$ higher training throughput.

\bibliographystyle{plainnat}
\bibliography{ref}

\clearpage
\appendix
\section*{Appendix}

\section{Extended Related Work and Discussion}
\label{app: related_work}

\paragraph{Parameter-Efficient Fine-tuning.}
Parameter-efficient tuning methods select a small subset of parameters or insert a few parameters to a pre-trained network. 
Then they only update the small subset of parameters,
while keeping others fixed \citep{softprompt, lst, prefix, bitfit, hu2021lora, karimi2021compacter, adapter,clp+21}. 
For example,
Adapters \citep{adapter, karimi2021compacter} insert a small module into the transformer blocks and only update it. 
Similarly, prompt tuning \citep{softprompt} introduces a small vector that is concatenated with the input embeddings.
BitFit \citep{bitfit} only tunes the bias term of the model.
LoRA \citep{hu2021lora}  injects trainable rank decomposition matrices into the transformer block.
Although these methods are ``parameter-efficient'', they actually cannot reduce the memory usage of the model itself. 
This is because we still needs to build the computation graph for the whole model.
Instead, the memory usage of optimizer states will be significantly reduced, which is in proportional to the number of trainable parameters \citep{kingma2014adam}.

\paragraph{Gradient Checkpointing.}

Gradient checkpointing helps decrease activation memory usage by saving only a selection of activations. However, it demands additional computation during the backward pass, as discarded activations must be recalculated \citep{dtr, checkmate}.
According to the report of Checkmate\footnote{\url{https://github.com/parasj/checkmate/issues/153}} \citep{checkmate}, 
it achieves ``a 2.3x memory reduction when training a BERT model with Checkmate optimizations (at 1x extra overhead for rematerialization)''.

\paragraph{Limitations}
Although \sas significantly reduces the computation of the backward pass in a hardware-friendly way i.e., dropping entire rows/columns in the tensor, the current implementation still hampers the execution time of linear operations. 
This is because the extra sampling process and data movement counteract the acceleration.
However, we note that (1) the overhead can be greatly reduced with better implementation, e.g., using prefetch and operation-fusion technique \citep{gact}; (2)  the existing implementation can still yield a large speedup when employing larger batch sizes (Figure \ref{fig:throughput}).

\paragraph{Potential Negative Societal Impacts.}
Our research primarily focuses on reducing the memory requirement of fine-tuning Language Models (LMs). 
The carbon emissions produced by LM fine-tuning may pose environmental issues. Our next step is to further improve the efficiency of LM fine-tuning, particularly on hardware with lower energy consumption.

\section{Unbiasedness of Weight Gradient}
\label{app: unbias_grad}

This part we directly follow the proof of Theorem 1 in ActNN \citep{actnn}.
For completeness, we provide the proof sketch here that is short and easy to follow.
Specifically, here we use \relu as the activation function for illustration convenience. 
We note that the conclusion in this section holds for any non-linear activation function.
Specifically, the forward pass of ReLU-Linear at the $l^{\text{th}}$ layer is 
\begin{align}
\mZ^{(l+1)} &=\mH^{(l)}\mW^{(l)}, \nonumber\\
\mH^{(l+1)} &=\relu(\mZ^{(l+1)}), \nonumber
\end{align}

and the backward pass of \relu is:

\begin{align*}
\E[\nabla \mZ^{(l+1)}] 
&=\E[\mathbbm{1}_{\mZ^{(l+1)}>0}\odot\nabla \mH^{(l+1)}]\nonumber\\
&=\mathbbm{1}_{\mZ^{(l+1)}>0}\odot\E[\nabla \mH^{(l+1)}],
\end{align*}

where $\odot$ is the element-wise product and $\mathbbm{1}$ is the indicator function.
The element-wise product is linear operation and $\mathbbm{1}_{\mZ^{(l+1)}>0}$ is only related to the pre-activation $\mZ^{(l+1)}$ in the forward pass.
We only apply the approximation during the backward pass
so $\mathbbm{1}_{\mZ^{(l+1)}>0}$ can be extracted from the expectation.
We know that for the last layer $L$, we have $\E[\nabla \mH^{(L)}]=\mH^{(L)}$ since we do not apply activation at the output layer.
We then can prove by induction that $\E[\nabla \mH^{(l+1)}]=\mH^{(l+1)}$ and $\E[\nabla\mW^{(l)}]=\mW^{(l)}$ for any layer $l$.

\section{Proof}
\label{app: theory}

\subsection{Derivation of \Eqref{eq: col_row_norm}}

Let $\mX\in\mathbb{R}^{n\times m}$,  $\mY\in\mathbb{R}^{m\times q}$ be two matrices.
The matrix multiplication $\mX\mY$ can be estimated as 
\begin{align}
\mm(\mX, \mY) &= \sum_{i=1}^{m} \mX_{:,i} \mY_{i,:}\approx \sum_{t=1}^{k} \frac{1}{kp_{i_t}}\mX_{:,i_t}\mY_{i_t,:} =\mX'\mY', \nonumber
\end{align}

\Eqref{eq: col_row_norm} shows the approximation error $\E[||\mX\mY-\mX'\mY'||_F]$ is minimized when the probabilities
\begin{equation}
    p_i = \frac{ ||\mX_{:,i}||_2\ ||\mY_{i,:}||_2}{\sum_{j=1}^{m} ||\mX_{:,j}||_2\ ||\mY_{j,:}||_2}. \nonumber
\end{equation}

\begin{proof}
    
Let $f(i)=\frac{\mX_{:i}\mY_{i:}}{p_{i}}\in\mathbb{R}^{n\times q}$. 
We note that $f(i)$ is an unbiased estimation of $\mX\mY$. Namely,
\begin{align}
\E_{j\sim \mathcal{P}}[f(j)]=\sum_{i=1}^m  p_{i}\frac{\mX_{:,i}\mY_{i:}}{p_{i}}=\mX\mY.   \nonumber 
\end{align}

Then we have 
\begin{equation}
\mX'\mY'=\frac{1}{k}\sum_{t=1}^k f(i_t),
\label{eq: eq_app1}
\end{equation}
where $i_1,\cdots,i_t$ are the index of the sampled column-row pairs at $t^{\mathrm{th}}$ random trials.
For each $i_t$, its variance is 

\begin{align}
    \Var[f(i_t)] &= \Var[\frac{\mX_{:i_t}\mY_{i_t:}}{p_{i_t}}] \nonumber \\
    &= \E[\frac{\mX^2_{:i_t}\mY^2_{i_t:}}{p^2_{i_t}}] - \E^2[\frac{\mX_{:i_t}\mY_{i_t:}}{p_{i_t}}] \nonumber \\
    &= \E[\frac{\mX^2_{:i_t}\mY^2_{i_t:}}{p^2_{i_t}}] - (\mX\mY)^2. \nonumber \\
    &= \sum_{t=1}^{m}\frac{\mX^2_{:t}\mY^2_{t:}}{p_t} - (\mX\mY)^2. 
\label{eq: eq_app2}
\end{align}
where the first step follows from the fact that $\Var[\vx]=\E[\vx^2]-\E^2[\vx]$.

Then we have, 


\begin{align}
    \E[||\mX\mY-\mX'\mY'||_F] &= \sum_{i=1}^{n}\sum_{j=1}^{q} \E[(\mX\mY-\mX'\mY')_{ij}^2] \nonumber \\
    &= \sum_{i=1}^{n}\sum_{j=1}^{q} \Var[(\mX'\mY')_{ij}]. \nonumber
\end{align}
By combining \Eqref{eq: eq_app1} and \Eqref{eq: eq_app2} into the above equation, we have
\begin{align}
    \E[||\mX\mY-\mX'\mY'||_F] &= \frac{1}{k}\sum_{i=1}^{n}\sum_{j=1}^{q} \sum_{t=1}^{m}\frac{\mX^2_{it}\mY^2_{tj}}{p_t} - \frac{1}{k}\|\mX\mY\|_F^2. \nonumber \\
     &=\frac{1}{k} \sum_{t=1}^m \frac{\|\mX_{:,t}\|_2^2 \|\mY_{t,:}\|_2^2}{p_t} - \frac{1}{k}\|\mX\mY\|_F^2. \nonumber
\end{align}

To minimize $\E[||\mX\mY-\mX'\mY'||_F]$, the optimal probability distribution can be obtained via solving the following optimization problem:

\begin{align}
    \min_{p_1,\cdots, p_m}  & \sum_{t=1}^m \frac{\|\mX_{:,t}\|_2^2 \|\mY_{t,:}\|_2^2}{p_t}, \nonumber \\
     \text{s.t.} & \sum_{t=1}^m p_t = 1. \nonumber
\end{align}

The solution to the above convex problem is the distribution defined in \Eqref{eq: col_row_norm}. Namely,

\begin{equation}
    p_i = \frac{ ||\mX_{:,i}||_2\ ||\mY_{i,:}||_2}{\sum_{j=1}^{m} ||\mX_{:,j}||_2\ ||\mY_{j,:}||_2}. \nonumber
\end{equation}

\end{proof}

\subsection{Unbiasedness of Our Proposed Estimator}
\label{app: theory_unbias}
\theounbias*
\begin{proof}

\begin{align}
&\E_{j\sim \mathcal{P}^{\mathcal{D}\backslash\mathcal{C}}} \Big[\sum_{c\in\mathcal{C}} f(c)p_c + (1 - \sum_{c\in\mathcal{C}}p_c)f(j)\Big] \nonumber \\
=&\sum_{c\in\mathcal{C}} f(c)p_c  + (1- \sum_{c\in\mathcal{C}} p_c)\E_{j\sim \mathcal{P}^{\mathcal{D}\backslash\mathcal{C}}} [f(j)] \nonumber \\
=&\sum_{c\in\mathcal{C}} f(c)p_c  + (1- \sum_{c\in\mathcal{C}} p_c)\sum_{j\in\mathcal{D}\backslash\mathcal{C}}\frac{p_j}{1- \sum_{c\in\mathcal{C}} p_c} f(j)\nonumber\\
=&\sum_{c\in\mathcal{C}} f(c)p_c + \sum_{j\in\mathcal{D}\backslash\mathcal{C}}f(j)p_j\nonumber\\
=&\E_{j\sim\mathcal{P}}[f(j)]\nonumber \\
=&\mX\mY\nonumber
\end{align}
\end{proof}

\subsection{Variance of Our Proposed Estimator}
\label{app: theory_var}

\theounvar*

\begin{proof}

Recall that the original estimator for matrix production $\mX\mY$ is defined as 
\begin{align}
\E_{i\sim \mathcal{P}}[f(i)]. 
\label{eq: ori_estimator}
\end{align}

and our proposed family of estimator is defined as:
\begin{align}
h(j)=\E_{j\sim \mathcal{P}^{\mathcal{D}\backslash\mathcal{C}}} \Big[\sum_{c\in\mathcal{C}} f(c)p_c + (1 - \sum_{c\in\mathcal{C}}p_c)f(j)\Big].
\label{eq: our_estimator_single_form_app}
\end{align}

We first define three independent random variables as belows:
\begin{align}
    u &\sim \mathcal{P}^{\mathcal{C}},\\
    j &\sim \mathcal{P}^{\mathcal{D}\backslash\mathcal{C}},\\
    b &\sim \text{Bernoulli}(1-\sum_{c\in\mathcal{C}}p_c).
\end{align}

According to the Law of total variance, we have
\begin{align}
        \Var[f(i)] &= \E_b\Big[Var[f(i)|b]\Big] + \Var_b\Big[\E[f(i)|b]\Big] \nonumber \\
    &\geq \E_b\Big[\Var[f(i)|b]\Big] \nonumber \\
    &= \sum_{c\in\mathcal{C}}p_c \Var[f(i)|b=0] + (1-\sum_{c\in\mathcal{C}}p_c) \Var[f(i)|b=1] \nonumber \\
    &\geq (1-\sum_{c\in\mathcal{C}}p_c) \Var[f(i)|i\in \mathcal{D}\backslash\mathcal{C}]
\end{align}
where the first step follows from the fact that for any random variance $\vx,\vy$, we have $\Var[\vy]=\E[\Var[\vy|\vx]]+\Var[\E[\vy|\vx]]$.
Also, by \Eqref{eq: our_estimator_single_form_app}, we have

\begin{equation}
    \Var[h(j)]=(1-\sum_{c\in\mathcal{C}}p_c)^2 \Var[f(j)|j\in \mathcal{D}\backslash\mathcal{C}].
\end{equation}

By combining the above two inequality, we have
\begin{equation}
    \Var[h(j)] \leq  (1-\sum_{c\in\mathcal{C}}p_c) \Var[f(i)].
    \label{eq: tmp_res_1}
\end{equation}

\Eqref{eq: tmp_res_1} quantitatively shows the variance reduction of $h(j)$ over $f(i)$.
Then we compare our estimator $\hat{g}(\mX, \mY)$ and $g(\mX, \mY)$ in terms of variance.

First, because 
$g(\mX,\mY) = \frac{1}{k}\sum_{t=1}^k f(i_t), ~~~i_1,\cdots i_k \overset{\text{i.i.d}}{\sim} \mathcal{P}$. Thus we have 
\begin{equation}
    \Var[g(\mX,\mY)] = \frac{1}{k} \Var[f(i)].
\end{equation}

Similarly, we have
\begin{equation}
    \Var[\hat{g}(\mX,\mY)] = \frac{1}{k-|\mathcal{C}|} \Var[h(j)].
\end{equation}

By combining \Eqref{eq: tmp_res_1} into the above two equations, we have
\begin{align}
     \Var[\hat{g}(\mX,\mY)] &= \frac{1}{k-|\mathcal{C}|} \Var[h(j)] \\ \nonumber
     &\leq \frac{1-\sum_{c\in\mathcal{C}}p_c}{k-|\mathcal{C}|} \Var[f(i)] \\ \nonumber
     &\leq \frac{1-\sum_{c\in\mathcal{C}}p_c}{k-|\mathcal{C}|}k \Var[g(\mX,\mY)], \nonumber
\end{align}

where the first step follows from \Eqref{eq: tmp_res_1}.
By setting $\frac{1-\sum_{c\in\mathcal{C}}p_c}{k-|\mathcal{C}|}k\leq 1$, we arrive the conclusion that when $\sum_{c\in\mathcal{C}}p_c > \frac{|\mathcal{C}|}{k}$, we have $ \Var[\hat{g}(\mX,\mY)]\leq \Var[g(\mX,\mY)]$.

Further, $\frac{1-\sum_{c\in\mathcal{C}}p_c}{k-|\mathcal{C}|}k$ achieves the minimal when $|\mathcal{C}|=\min_{|\mathcal{C}|\in\{0,\cdots,k\}}\frac{1 - \sum_{c\in\mathcal{C}}p_c}{k-|\mathcal{C}|}$.







\end{proof}

\section{Implementation Details}
\label{app: implementation}
The pseudocode for approximated linear layer with \sas and standard line layer is given in Algorithm \ref{algo: approx_q_linear} and Algorithm \ref{algo: q_linear}, respectively.
The column-row pair sampling procedure is given in Algorithm \ref{algo: subsample}.
For the ease of illustration, we ignore the sequential length.
As we mentioned in the main text, we only replace the \mm in the backward pass with \sas.
According to \Eqref{eq: bwd_w}, we need the activation gradient $\nabla\mZ$ to perform the column-row pair sampling during the forward pass.
Thus we initialize a cache in CPU memory to store the gradient norm of activations from the last step.
When performing column-row pair selection, we need to swap the gradient norm of activations between CPU and GPU, which will cause extra time overhead due to the data movement.
Fortunately, we note that the number of elements in the gradient norm of activations is significantly less than the one in activations, which does not cause a significant time overhead.

\begin{algorithm}[h!]  
 \SetKwInput{KwParam}{Hyperparameter}
 \KwParam{The total budget of column-row pairs $k$.}
  \SetKwProg{myProcedure}{procedure}{\string:}{end procedure}
  \myProcedure{\textsc{Init}}{
  Initialize $\texttt{Cache}\in\mathbb{R}^{N}$ as an empty matrix in main memory \tcp{$N$ is the total number of samples in the dataset. \texttt{Cache} is used for saving the norm of output gradient $\nabla\mZ$.}
  }
  \myProcedure{\textsc{Forward Pass}}{
  \KwIn{activation $\mH\in\mathbb{R}^{B\times D}$, weight $\mW\in\mathbb{R}^{D\times D}$, indices of the current batch samples $BI=\{j_{1},\cdots,j_B\}$.}
  $\texttt{ctx}\leftarrow{\{\}}$\hspace{.5em}\tcp{the context which saves tensors for backward}
$\mZ = \mH\mW$\\
$\mH', ind \leftarrow$\textsc{Subsample}($\mH$, \texttt{Cache}[$BI$], $k$) \\ \tcp{\texttt{Cache}[$BI$] is the cached gradient norm from the backward pass; $ind$ is the set of involved column-row pair indices}x
$\texttt{ctx}\leftarrow\{\mH', \mW, BI, ind\}$\\
\Return{$\mZ$}
  }
  \myProcedure{\textsc{Backward pass}}{
  \KwIn{$\texttt{ctx}$ from the forward pass, output gradient $\nabla\mZ\in\mathbb{R}^{B\times D}$}
$\mH', \mW, BI, ind \leftarrow\texttt{ctx}$\\
$\nabla\mH=\nabla\mZ\mW^\top$\\
$\nabla\mZ'\leftarrow \nabla\mZ[ind]$\\ \tcp{$\nabla\mZ'\in\mathbb{R}^{k \times D}$}
$\nabla\mW = \mH'^\top\nabla\mZ'$\\
\For{j \rm{in} $BI$}{
\texttt{Cache}[$j$] = $\|\nabla\mZ_{j,:}\|_2$
}
\tcp{Update the gradient norm of samples in the current batch}
\Return{$\nabla\mH, \nabla\mW$}
}
  \caption{Forward \& Backward pass of Approximated Linear Layer}
  \label{algo: approx_q_linear}
\end{algorithm}

\begin{algorithm}
  \KwIn{activation $\mH\in\mathbb{R}^{B\times D}$, gradient norm $\vz\in\mathbb{R}^B$, the total budget of column-row pairs $k$.}\
  \For{$i=1,\cdots,B$}{
  $p_i \leftarrow \frac{ \vz_i ||\mH_{i,:}||_2}{\sum_{j=1}^{B} \vz_i ||\mH_{j,:}||_2}$ \tcp{The probability of column-row pairs defined in \Eqref{eq: col_row_norm}.} 
  }
  $\hat{k}\leftarrow \min_{\hat{k}\in\{0,\cdots,k\}}\frac{1 - \sum_{c\in\mathcal{C}}p_c}{k-\hat{k}}$, s.t. $\mathcal{C}=|\hat{k}|$. \tcp{$\mathcal{C}$ is the set of column-row pair indices associated with $|\mathcal{C}|$ largest $p_i$.} 
  Sample $k-|\mathcal{C}|$ i.i.d. column-row pairs $\mathcal{C}_{\text{stoc}}=\{i_1,\cdots,i_{k-|\mathcal{C}|}\}$ from the distribution $\mathcal{P}^{\mathcal{D}\backslash\mathcal{C}}$\\
  $ind\leftarrow \mathcal{C} \cup \mathcal{C}_{\text{stoc}}$\\
 \For{$j\in \mathcal{C}_{\text{stoc}}$}{
$\mH[j,:]\leftarrow \mH[j,:] * \frac{1 - \sum_{c\in\mathcal{C}}p_c}{(k-|\mathcal{C}|)p_j}$~~~~\tcp{We need to normalize the stochastic part in \Eqref{eq: our_estimator} to ensure the unbiasedness.}
 }
  $\mH'\leftarrow \mH[ind]$~~~~\tcp{$\mH'\in\mathbb{R}^{k \times D}$}
  \Return{$\mH'$, ind}
  \caption{\textsc{Subsample}}
  \label{algo: subsample}
\end{algorithm}

\begin{algorithm}
  \SetKwProg{myProcedure}{procedure}{\string:}{end procedure}
  \myProcedure{\textsc{Forward Pass}}{
  \KwIn{activation $\mH_Q\in\mathbb{R}^{BS\times D}$, weight $\mW_Q\in\mathbb{R}^{D\times D}$, batch indices $index$}
  $\texttt{ctx}\leftarrow{\{\}}$\hspace{.5em}\tcp{the context which saves tensors for backward}
$\mZ_Q = \mH_Q\mW_Q$\\
$\texttt{ctx}\leftarrow\{\mH_Q, \mW_Q\}$\\
\Return{$\mZ_Q$}
  }
  \myProcedure{\textsc{Backward pass}}{
  \KwIn{$\texttt{ctx}$ from the forward pass, output gradient $\nabla\mZ_Q$}
$\mH_Q, \mW_Q \leftarrow\texttt{ctx}$\\
$\nabla\mH_Q=\nabla\mZ_Q\mW_Q^\top$\\
$\nabla\mW_Q = \mH_Q^\top\nabla\mZ_Q$\\
\Return{$\nabla\mH_Q, \nabla\mW_Q$}
}
  
  \caption{Forward \& Backward pass of the standard Linear layer}
  \label{algo: q_linear}
\end{algorithm}

\clearpage
\section{More Experimental Results}
\label{app: more_res}

\begin{figure*}[hbt!]
    \centering
    \begin{subfigure}[h]{0.34\linewidth}
    \includegraphics[width=\linewidth]{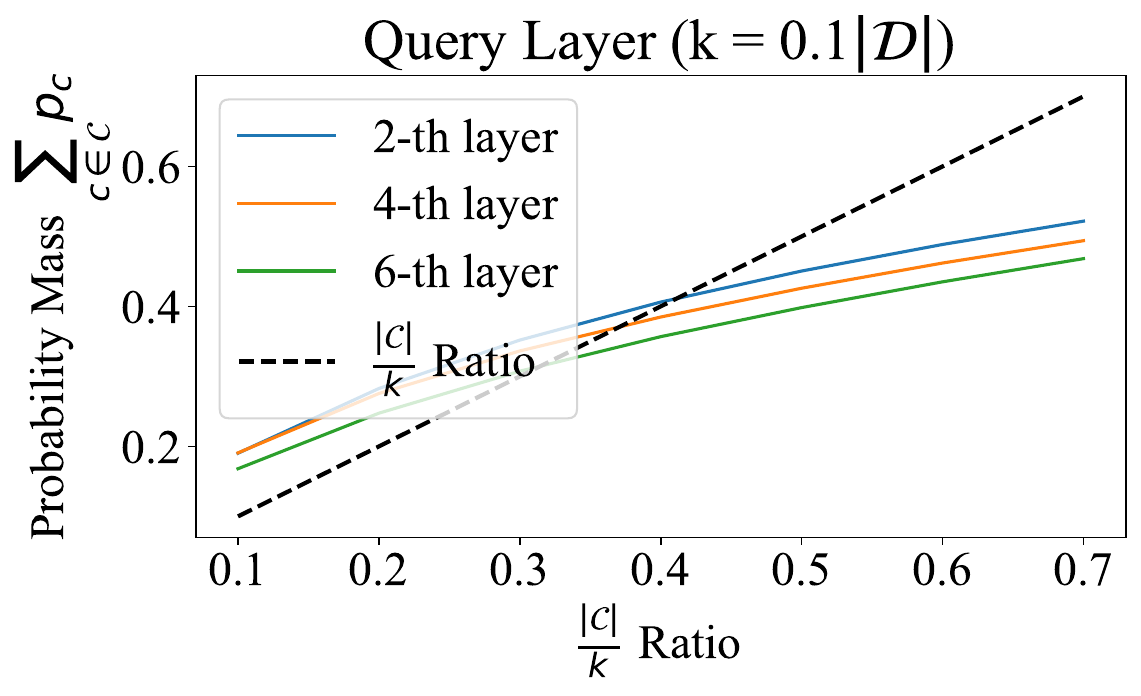}
    \end{subfigure}
    \begin{subfigure}[h]{0.31\linewidth}
    \includegraphics[width=\linewidth]{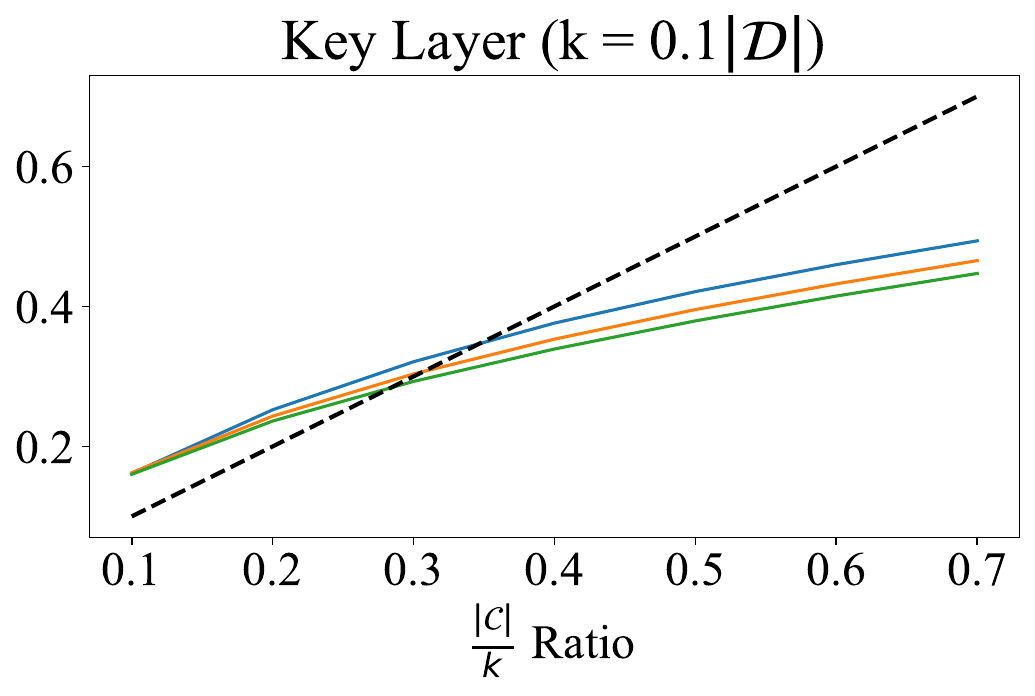}
    \end{subfigure}
    \begin{subfigure}[h]{0.31\linewidth}
    \includegraphics[width=\linewidth]{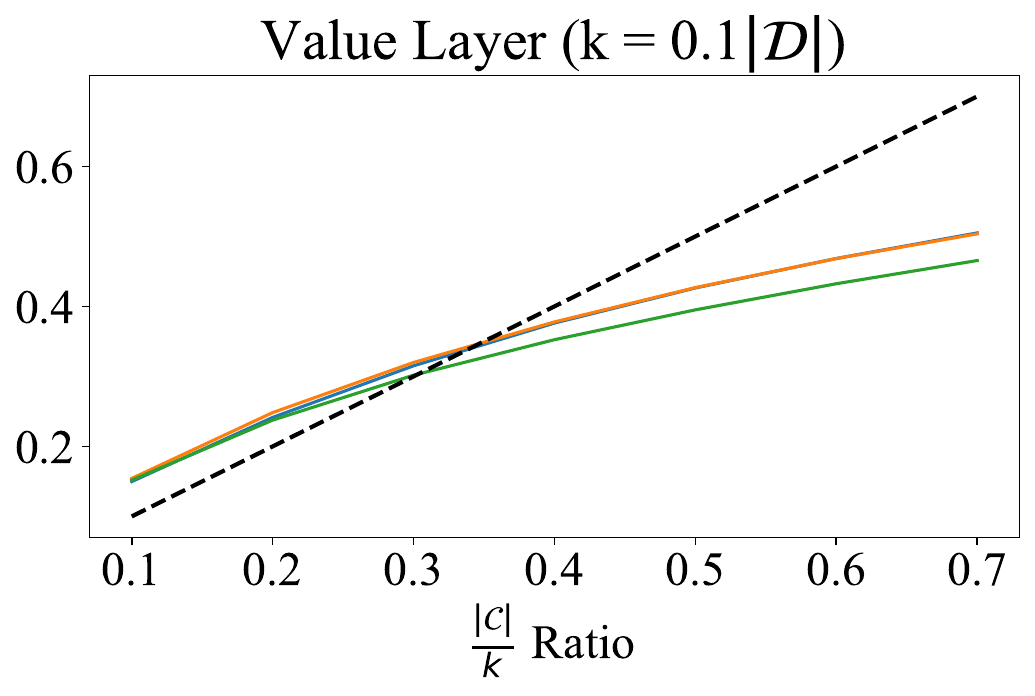}
    \end{subfigure}
    \caption{The probability mass $\sum_{c\in\mathcal{C}}p_c$ versus $\frac{|\mathcal{C}|}{k}$ in \Eqref{eq: var_thresh} at $k=0.1|\mathcal{D}|$.
    Here we visualize the column-row index distribution of query/key/value layer T5-base model, fine-tuned on RTE dataset.}
    \label{fig: exp_assump_analysis_th2_direct_k_0.1}
\end{figure*}

\begin{figure*}[hbt!]
    \centering
    \begin{subfigure}[h]{0.34\linewidth}
    \includegraphics[width=\linewidth]{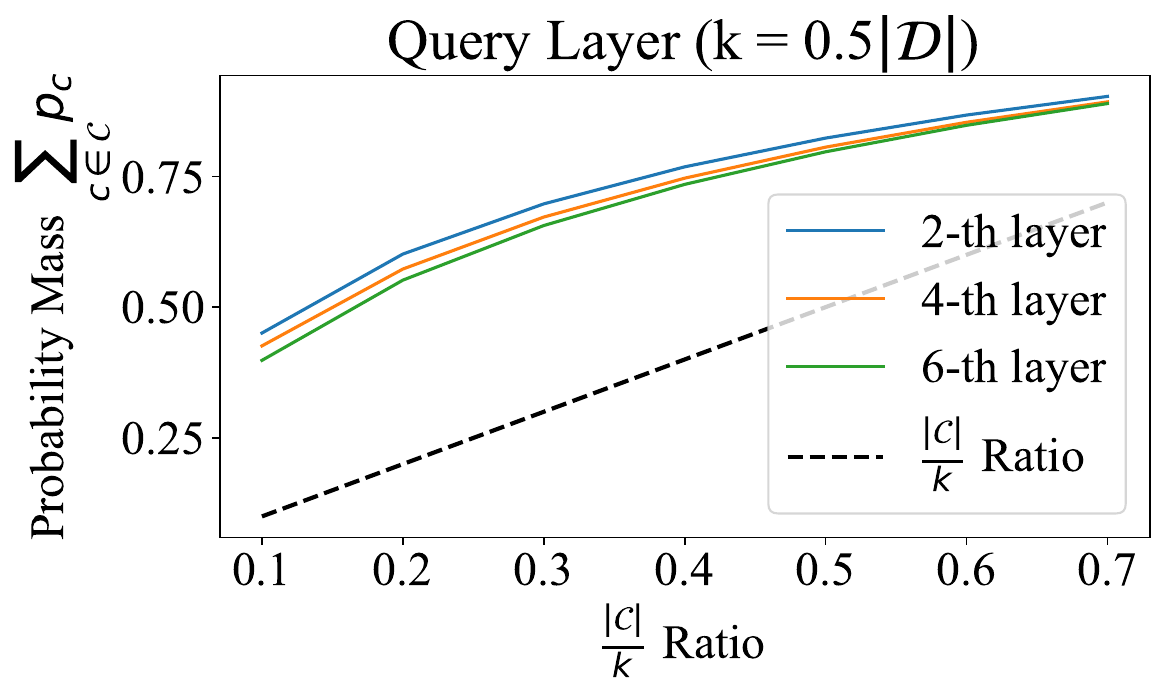}
    \end{subfigure}
    \begin{subfigure}[h]{0.31\linewidth}
    \includegraphics[width=\linewidth]{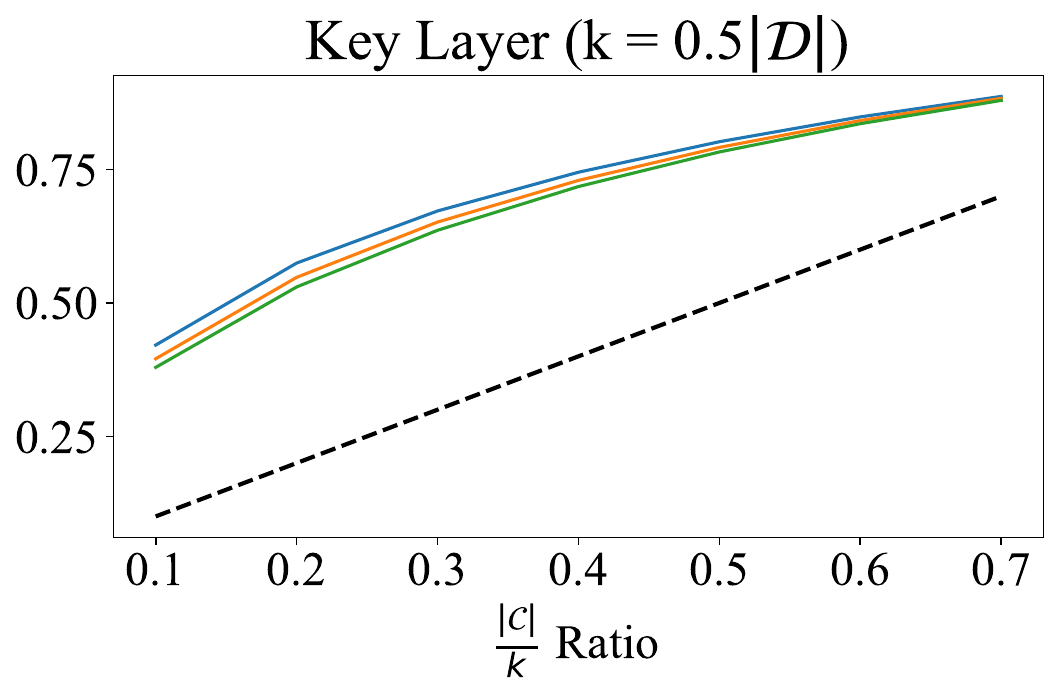}
    \end{subfigure}
    \begin{subfigure}[h]{0.31\linewidth}
    \includegraphics[width=\linewidth]{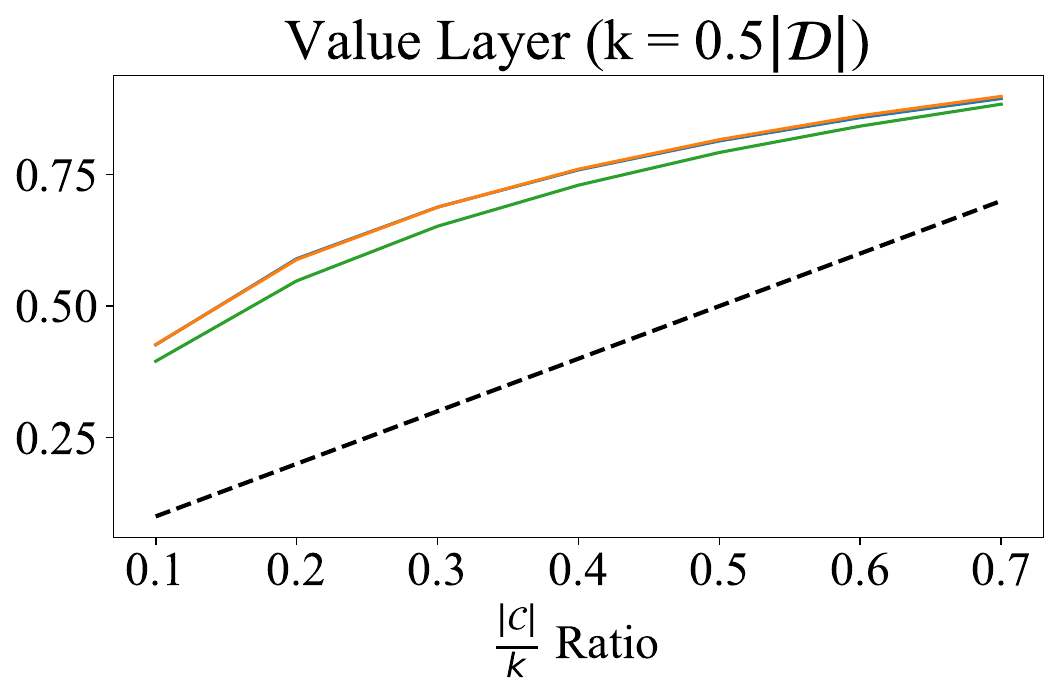}
    \end{subfigure}
    \caption{The probability mass $\sum_{c\in\mathcal{C}}p_c$ versus $\frac{|\mathcal{C}|}{k}$ in \Eqref{eq: var_thresh} at $k=0.5|\mathcal{D}|$.
    Here we visualize the column-row index distribution of query/key/value layer T5-base model, fine-tuned on RTE dataset.}
    \label{fig: exp_assump_analysis_th2_direct_k_0.5}
\end{figure*}

\begin{figure*}[hbt!]
    \centering
    \begin{subfigure}[h]{0.34\linewidth}
    \includegraphics[width=\linewidth]{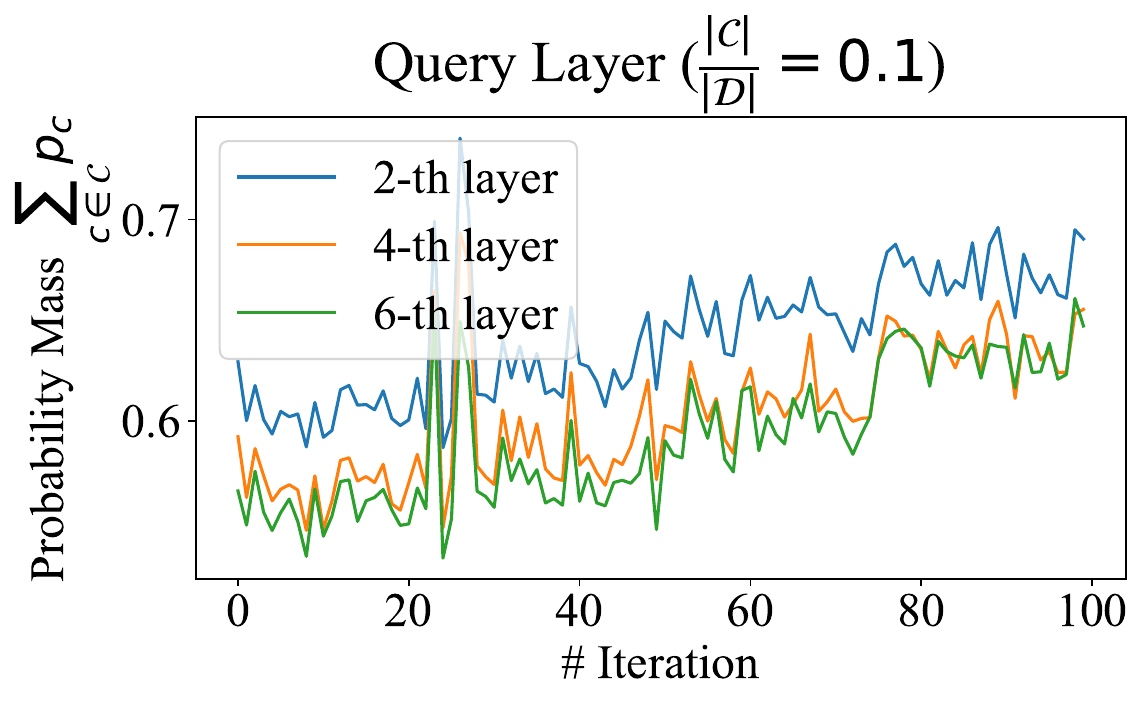}
    \end{subfigure}
    \begin{subfigure}[h]{0.31\linewidth}
    \includegraphics[width=\linewidth]{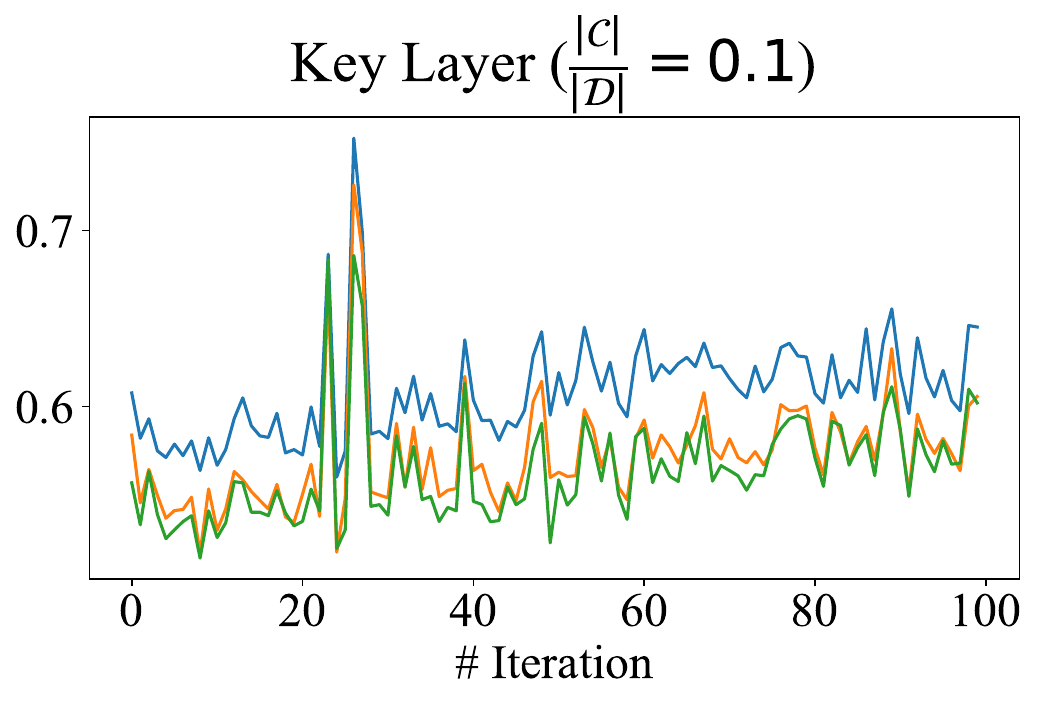}
    \end{subfigure}
    \begin{subfigure}[h]{0.31\linewidth}
    \includegraphics[width=\linewidth]{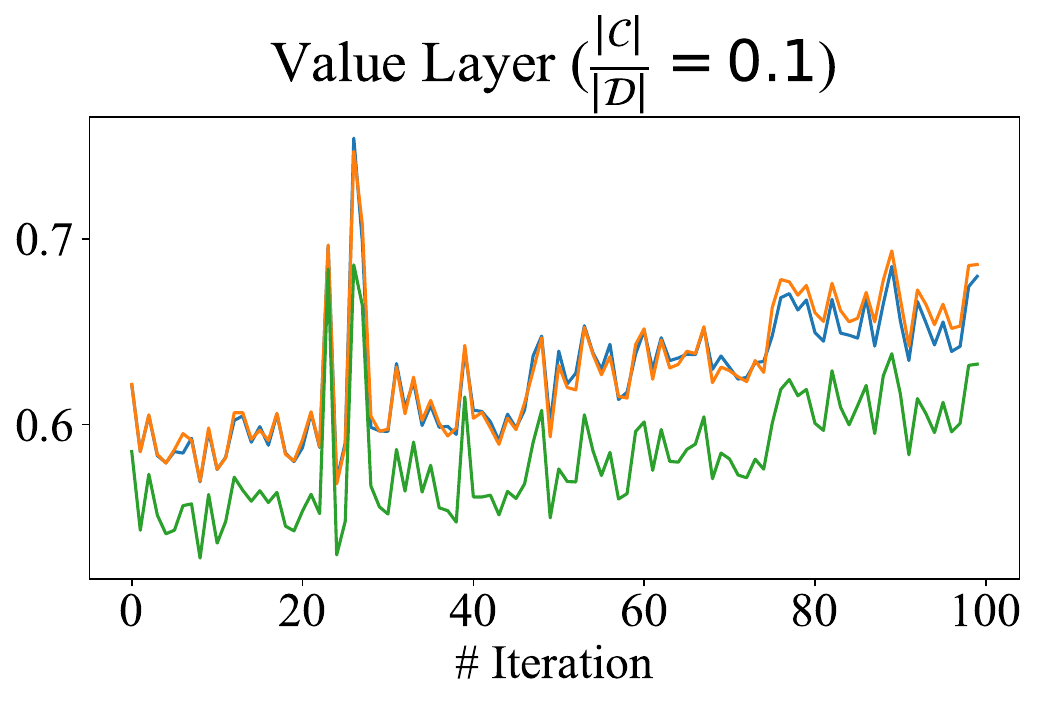}
    \end{subfigure}
    \caption{The probability mass of top-$10\%$ column-row pairs in \Eqref{eq: col_row_norm} versus iterations.
    Here we visualize the query/key/value layer T5-base model, fine-tuned on RTE dataset.}
    \label{fig: exp_assump_analysis_versus_iterations}
\end{figure*}
\subsection{More Experimental Analysis on Theorem \ref{theo: var}}
\label{app: more_res_theo2}
To evaluate Theorem \ref{theo: var} more comprehensively, 
below we also plot the $\sum_{c\in\mathcal{C}}p_c$ versus $\frac{|\mathcal{C}|}{k}$ at $k=0.1|\mathcal{D}|$ and $k=0.5|\mathcal{D}|$ in Figure \ref{fig: exp_assump_analysis_th2_direct_k_0.1} and \ref{fig: exp_assump_analysis_th2_direct_k_0.5}, respectively.
We also plot $\sum_{c\in\mathcal{C}}p_c$ versus iterations in Figure \ref{fig: exp_assump_analysis_versus_iterations}.
We summarize that the the column-row index distribution is concentrated on a few column-row pairs. Thus, the assumption in Theorem \ref{theo: var} holds under the context of fine-tuning transformers.

\subsection{More Experimental Speed Analysis}
\label{app:more_exp_speed}
Increasing the batch size can often result in faster model convergence and/or enhance the final performance. 
Ideally, we should adjust the batch size according to the requirements of our model rather than being constrained by the GPU's memory capacity. 
To illustrate this, we have represented the correlation between peak memory usage and maximum mini-batch size for T5-Base, T5-Large, and T5-3B in Figure \ref{fig:more_mem_vs_bs}. Our observations highlight that \sas effectively increases the maximum available batch size.

\nocite{zhong2022revisit}
\nocite{DBLP:journals/corr/abs-2010-13015}
\nocite{liu2021divaug}

We also provide the apple-to-apple speed comparison for linear operation with and without \sas in Table \ref{tab: latency}.
In Table \ref{tab: latency}, ``Fwd'', ``Bwd'', and ``F-B'' are the time of forward pass,  the time of backward pass, and the total time for both the forward and backward pass, respectively. 
We summarize that under the same workload, the current implementation of \sas may roughly slow down the linear operation about $20\%$.
This is because the extra sampling process and data movement counteract the acceleration (see Algorithm \ref{algo: approx_q_linear}).
However, we note that (1) the overhead can be greatly reduced with better implementation, e.g., using prefetch and operation-fusion technique \citep{gact}; (2)  the existing implementation can still yield a large speedup when employing larger batch sizes (Figure \ref{fig:throughput}).

\begin{figure}[h!]
    \centering
    \begin{subfigure}[h]{0.3\linewidth}
      \includegraphics[width=1\linewidth]{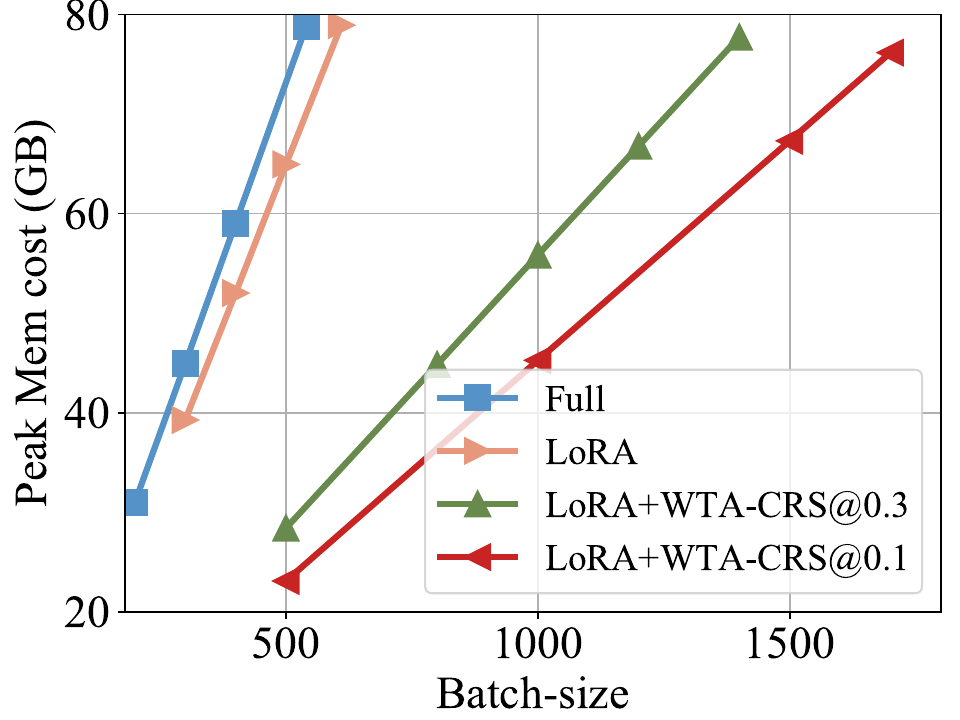}
      \caption{T5-Base}
    \end{subfigure}%
    \begin{subfigure}[h]{0.3\linewidth}
      \includegraphics[width=1\linewidth]{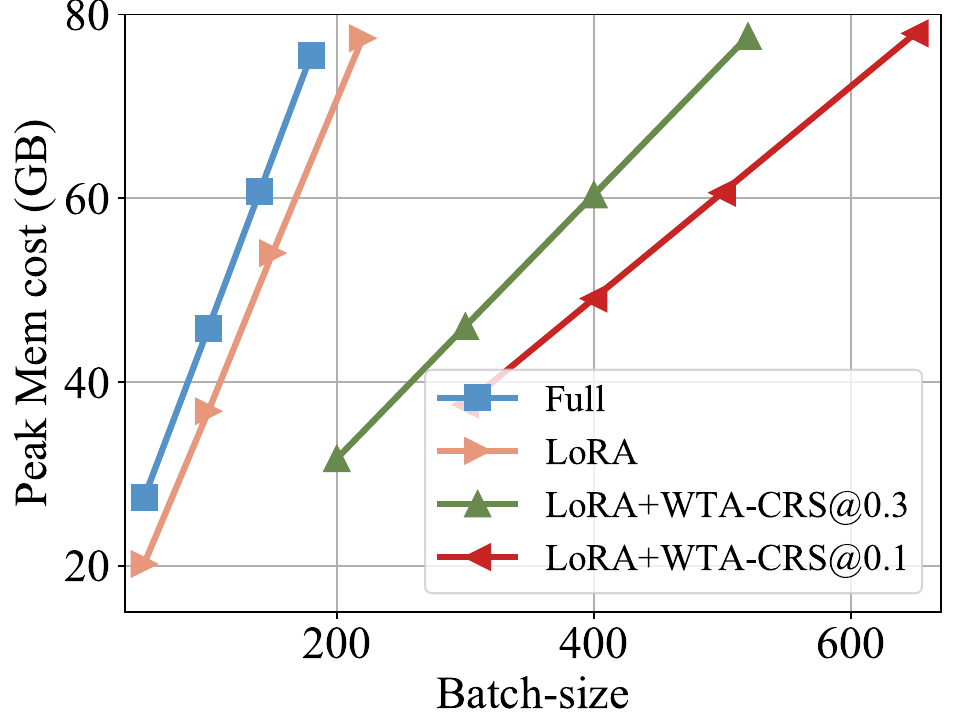}
      \caption{T5-Large}
    \end{subfigure}%
    \begin{subfigure}[h]{0.3\linewidth}
      \includegraphics[width=1\linewidth]{figures/t5-3b_mem_vs_batch.pdf}
      \caption{T5-3B}
    \end{subfigure}%
    \caption{Peak memory usage versus maximum mini-batch size of T5.}
    \label{fig:more_mem_vs_bs}
\end{figure}

\begin{table}[]
\centering
    \begin{tabular}{llcccccc}
    \toprule
          & Method & \makecell[c]{T5- \\ ATT} & \makecell[c]{T5- \\ FF} & \makecell[c]{T5 \\ Block} & \makecell[c]{T5- \\ Large} \\
    \midrule
         \multirow{2}{*}{Fwd} & Full & 8 & 10 & 17 & 1052 \\
         & \Algnameabbr{} & 22 & 16 & 37 & 2013 \\
    \midrule
         \multirow{2}{*}{Bwd} & Full & 16 & 19 & 34 & 2073 \\
         & \Algnameabbr{} & 15 & 14 & 30 & 1738 \\
    \midrule
         \multirow{2}{*}{F-B} & Full & 24 & 29 & 51 & 3125 \\
         & \Algnameabbr{} & 37 & 30 & 67 & 3751 \\
    \bottomrule
    \end{tabular}
    \vspace{1em}
  \captionof{table}{\small Latency~(ms) of Forward and Backward pass.}
  \label{tab: latency}
\end{table}

\section{Experimental Settings}
\label{appendix:hyper_setting}

We give the detailed hyper-parameter setting in this section.
Specifically, for both T5 and BERT models, the parameters are updated with the AdamW optimizer $\beta_1 = 0.9$ $\beta_2 = 0.999$ $\epsilon = 10^{-8}$ and $\text{weight decay} = 0 $.
The the learning rate is adjusted with a linear LR Scheduler, which maintains a constant learning rate for the initial 500 steps, and adjusts it gradually thereafter.
The input sequences are padded to the maximum length 128.
\Algnameabbr{} has a LoRA dimension 32 if it is combined with LoRA.
To achieve the optimal solution, the T5-Base, Large, 3B and BERT-Base and Large models have different learning rate, training epoch number, and mini-batch size on different datasets, which are given in Tables~\ref{tab:lr_parameter}, \ref{tab:epoch_parameter}, \ref{tab:bs_parameter}, respectively.

\subsection{Computational Infrastructure}
\label{appendix:hardware}

The computational infrastructure information is given in Table \ref{tab:computing_infrastructure}.

\begin{table}[h]
\centering
\caption{Computing infrastructure for the experiments.}
\begin{tabular}{l|c}
\toprule
Device Attribute & Value \\
\hline
Computing infrastructure & GPU \\
GPU model & NVIDIA-A100 \\ 
GPU Memory & 81251MB \\ 
CUDA Version & 11.4 \\
CPU Memory & 512GB \\
\bottomrule
\end{tabular}
\label{tab:computing_infrastructure}
\end{table}

\begin{table}[]
\caption{Learning rate.}
\label{tab:lr_parameter}
\resizebox{\textwidth}{!}{
\begin{tabular}{llcccccccc}
\toprule
Model                       & Method           & CoLA          & SST-2         & MRPC          & QQP           & MNLI          & QNLI          & RTE           & STS-B       \\ \hline
\multirow{2}{*}{BERT-Base}  & \Algnameabbr{}@0.3 & \multicolumn{8}{c}{2e-5} \\  
                            & LoRA+\Algnameabbr{}@0.3 & 2e-4 & 5e-4 & 2e-4 & 3e-4 & 3e-4 & 2e-4 & 2e-4 & 3e-4 \\  \midrule
\multirow{4}{*}{T5-Base}    & \Algnameabbr{}@0.3 & \multicolumn{5}{c}{3e-5} & 3e-6 & 3e-5 & 3e-5 \\ 
                            & \Algnameabbr{}@0.1 & \multicolumn{8}{c}{3e-5} \\ 
                            & LoRA+\Algnameabbr{}@0.3 & 3e-4 & 3e-5 & 3e-4 & 3e-5 & 3e-5 & 3e-5 & 3e-4 & 3e-4 \\ 
                            & LoRA+\Algnameabbr{}@0.1 & 3e-4 & 3e-5 & 3e-4 & 3e-5 & 3e-5 & 3e-5 & 3e-4 & 3e-4 \\ \midrule
\multirow{2}{*}{BERT-Large} & \Algnameabbr{}@0.3 & \multicolumn{8}{c}{2e-5} \\ 
                            & LoRA+\Algnameabbr{}@0.3 & 3e-4 & 2e-4 & 2e-4 & 2e-4 & 2e-4 & 2e-4 & 3e-4 & 3e-4 \\ \midrule
\multirow{2}{*}{T5-Large}   & \Algnameabbr{}@0.3 & \multicolumn{5}{c}{3e-5} & 3e-6 & 3e-5 & 3e-5 \\ 
                            & \Algnameabbr{}@0.1 & \multicolumn{5}{c}{3e-5} & 3e-6 & 3e-5 & 3e-5 \\ 
                            & LoRA+\Algnameabbr{}@0.3 & 3e-4 & 3e-5 & 3e-4 & 3e-5 & 3e-5 & 3e-5 & 3e-4 & 3e-4 \\ 
                            & LoRA+\Algnameabbr{}@0.1 & 3e-4 & 3e-5 & 3e-4 & 3e-5 & 3e-5 & 3e-5 & 3e-4 & 3e-4 \\ \midrule
\multirow{2}{*}{T5-3B}      & LoRA+\Algnameabbr{}@0.3 & 3e-4 & 3e-5 & 3e-4 & 3e-4 & 3e-4 & 3e-5 & 3e-4 & 3e-4 \\ 
                            & LoRA+\Algnameabbr{}@0.1 & 3e-4 & 3e-5 & 3e-4 & 3e-4 & 3e-4 & 3e-5 & 3e-4 & 3e-4 \\ \bottomrule
\end{tabular}
}
\end{table}

\begin{table}[]
\caption{Training epoch number.}
\resizebox{\textwidth}{!}{
\label{tab:epoch_parameter}
\begin{tabular}{llcccccccc}
\toprule
Model                       & Method           & CoLA          & SST-2         & MRPC          & QQP           & MNLI          & QNLI          & RTE           & STS-B       \\ \midrule
\multirow{2}{*}{BERT-Base}  & \Algnameabbr{}@0.3 & 20 & 20 & 10 & 10 & 10 & 10 & 20 & 10 \\ 
                            & LoRA+\Algnameabbr{}@0.3 & 60 & 20 & 20 & 20 & 20 & 20 & 40 & 40 \\  \hline
\multirow{4}{*}{T5-Base}    & \Algnameabbr{}@0.3 & 40 & 10 & 20 & 10 & 10 & 10 & 50 & 20 \\ 
                            & \Algnameabbr{}@0.1 & 40 & 10 & 20 & 10 & 10 & 10 & 50 & 20 \\ 
                            & LoRA+\Algnameabbr{}@0.3 & 40 & 10 & 20 & 20 & 20 & 10 & 50 & 20 \\ 
                            & LoRA+\Algnameabbr{}@0.1 & 40 & 10 & 20 & 20 & 20 & 10 & 50 & 20 \\ \midrule
\multirow{2}{*}{BERT-Large} & \Algnameabbr{}@0.3 & 60 & 20 & 20 & 10 & 10 & 10 & 40 & 10 \\                               & LoRA+\Algnameabbr{}@0.3 & 60 & 20 & 20 & 20 & 20 & 20 & 40 & 40 \\ \midrule
\multirow{4}{*}{T5-Large}   & \Algnameabbr{}@0.3 & 20 & 10 & 20 & 10 & 10 & 10 & 40 & 20 \\ 
                            & \Algnameabbr{}@0.1 & 20 & 10 & 20 & 10 & 10 & 10 & 40 & 20 \\
                            & LoRA+\Algnameabbr{}@0.3 & 40 & 10 & 40 & 10 & 10 & 10 & 60 & 20 \\
                            & LoRA+\Algnameabbr{}@0.1 & 40 & 10 & 20 & 10 & 10 & 10 & 60 & 20 \\ \midrule
\multirow{2}{*}{T5-3B}      & LoRA+\Algnameabbr{}@0.3 & 40 & 10 & 20 & 10 & 10 & 10 & 60 & 20 \\
                            & LoRA+\Algnameabbr{}@0.1 & 40 & 10 & 20 & 10 & 10 & 10 & 60 & 20 \\ \bottomrule
\end{tabular}
}
\end{table}

\begin{table}[]
\caption{Training mini-batch size.}
\label{tab:bs_parameter}
\resizebox{\textwidth}{!}{
\begin{tabular}{llcccccccc}
\hline
Model                       & Method           & CoLA          & SST-2         & MRPC          & QQP           & MNLI          & QNLI          & RTE           & STS-B       \\ \toprule
\multirow{2}{*}{BERT-Base/Large}   & \Algnameabbr{}@0.3 & \multicolumn{7}{c}{128} & 16 \\ 
                            & LoRA+\Algnameabbr{}@0.3 & \multicolumn{7}{c}{128} & 16 \\ \midrule
\multirow{4}{*}{T5-Base/Large/3B}    & \Algnameabbr{}@0.3 & \multicolumn{8}{c}{100} \\ 
                            & \Algnameabbr{}@0.1 & \multicolumn{8}{c}{100} \\ 
                            & LoRA+\Algnameabbr{}@0.3 & \multicolumn{8}{c}{100} \\ 
                            & LoRA+\Algnameabbr{}@0.1 & \multicolumn{8}{c}{100} \\ \bottomrule
\end{tabular}
}
\end{table}

\end{document}